\newcommand{\soutthick}[1]{%
      \st{#1}%
}
\newcolumntype{Y}{>{\centering\arraybackslash}X}
\newcolumntype{L}{>{\arraybackslash}X}
\newcommand{\our}{\mbox{Admin}\xspace}
\newcommand{\eg}{\textit{e.g.}}
\newcommand{\ie}{\textit{i.e.}}
\newcommand{\smallsection}[1]{{\vspace{0.2cm}\noindent\textbf{#1.}}}
\newcommand{\LN}{{f_{\mbox{\scriptsize LN}}}}
\newcommand{\ATT}{{f_{\mbox{\scriptsize ATT}}}}
\newcommand{\SATT}{{f_{\mbox{\scriptsize S-ATT}}}}
\newcommand{\EATT}{{f_{\mbox{\scriptsize E-ATT}}}}
\newcommand{\FFN}{{f_{\mbox{\scriptsize FFN}}}}
\newcommand{\softmax}{{f_{s}}}
\newcommand{\ep}{^{(pe)}}
\newcommand{\pd}{^{(pd)}}
\newcommand{\eo}{^{(oe)}}
\newcommand{\od}{^{(od)}}
\newcommand{\xbteo}{{\xb^T}^{(oe)}}
\newcommand{\wf}{W^{(1)}}
\newcommand{\wff}{W^{(2)}}
\newcommand{\wk}{W^{(K)}}
\newcommand{\wv}{W^{(V_1)}}
\newcommand{\wq}{W^{(Q)}}
\newcommand{\wo}{W^{(V_2)}}
\newcommand{\hxb}{\mathbf{\hat{x}}}
\newcommand{\hab}{\mathbf{\hat{a}}}
\title{{U}nderstanding the {D}ifficulty of {T}raining {T}ransformers}
\author{
Liyuan Liu\textsuperscript{$\dagger\ddagger$}~~
Xiaodong Liu\textsuperscript{$\ddagger$}~~
Jianfeng Gao\textsuperscript{$\ddagger$}~~
Weizhu Chen\textsuperscript{$\mathsection$}~~
Jiawei Han\textsuperscript{$\dagger$}\\
\texttt{\small \{ll2, hanj\}@illinois.edu}
,~~\texttt{\small \{xiaodl,jfgao,wzchen\}@microsoft.com}
\\
\textsuperscript{$\dagger$}{University of Illinois at Urbana-Champaign} \\ \textsuperscript{$\ddagger$}Microsoft Research \\ \textsuperscript{$\mathsection$} Microsoft Dynamics 365 AI
}
\date{}
\begin{document}
\maketitle


\begin{abstract}
  

Transformers have proved effective in many NLP tasks.
However, their training requires non-trivial efforts regarding  designing cutting-edge optimizers and learning rate schedulers carefully (\eg, conventional SGD fails to train Transformers effectively). 
Our objective here is to understand \emph{what complicates Transformer training} from both empirical and theoretical perspectives. 
Our analysis reveals that unbalanced gradients are not the root cause of the instability of training.  
Instead, we identify an \emph{amplification effect} that influences training substantially--for each layer in a multi-layer Transformer model, 
heavy dependency on its residual branch makes training unstable, since it amplifies small parameter perturbations (\eg, parameter updates)
and results in significant disturbances in the model output.
Yet we observe that a light dependency limits the model potential and leads to inferior trained models. 
Inspired by our analysis, we propose \emph{\our} (\textbf{Ad}aptive \textbf{m}odel \textbf{in}itialization) to stabilize the early stage's training and unleash its full potential in the late stage. 
Extensive experiments show that \our is more stable, converges faster, and leads to better performance\footnote{Implementations are released at: \url{https://github.com/LiyuanLucasLiu/Transforemr-Clinic}}. 

\end{abstract}


\section{Introduction}

Transformers~\cite{Vaswani2017AttentionIA} have led to a series of breakthroughs in various deep learning tasks ~\cite{Devlin2019BERTPO,Velickovic2017GraphAN}. 
They do not contain recurrent connections and can parallelize all computations in the same layer, thus improving effectiveness, efficiency, and scalability.
Training Transformers, however, requires extra efforts.
For example, although stochastic gradient descent (SGD)
is the standard algorithm for conventional RNNs and CNNs, it converges to bad/suspicious local optima for Transformers~\cite{Zhang2019WhyAB}.
Moreover, comparing to other neural architectures, removing the warmup stage in Transformer training results in more severe consequences such as model divergence~\cite{Popel2018TrainingTF,Liu2019OnTV}.
Here, we conduct comprehensive analyses in empirical and theoretical manners to answer the question: \textit{what complicates Transformer training}.

\begin{figure}[t]
\centering
\includegraphics[width=\linewidth]{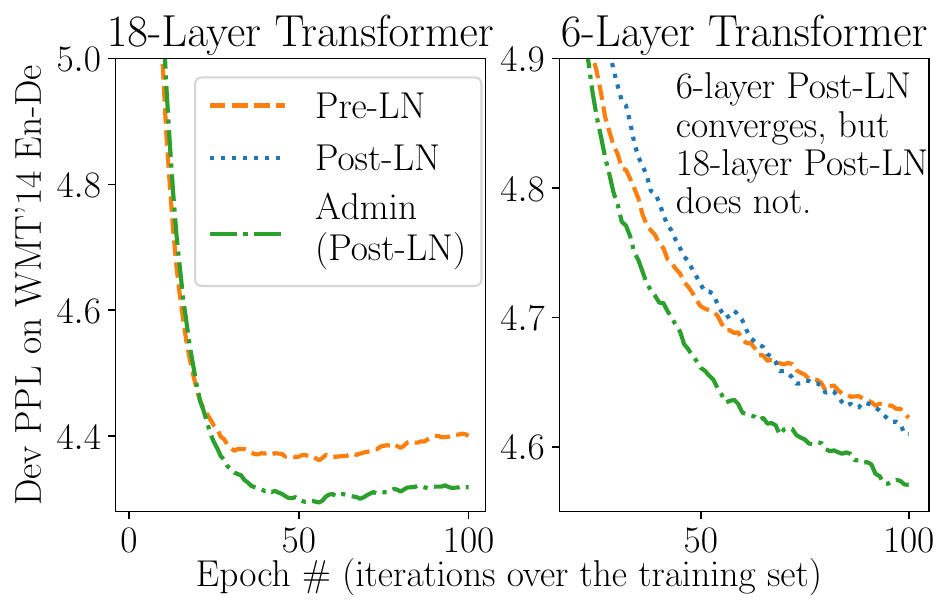}
\caption{Lacking enough robustness and stability, the 18-Layer Post-LN Transformer training (\ie the original architecture) diverges and is omitted in the left graph. \our not only stabilizes model training but unleashes the model potential for better performance. 
}
\label{fig:wmt14ende_devppl_6_18}
\end{figure}

Our analysis starts from the observation: the original Transformer (referred to as Post-LN) is less robust than its Pre-LN variant\footnote{As in Figure~\ref{fig:pre-post-diagram}, Post-LN places layer norm outside of residual blocks, and Pre-LN moves them to the inside.}~\cite{Baevski2018AdaptiveIR,Xiong2019OnLN,Nguyen2019TransformersWT}. 
We recognize that gradient vanishing issue is not the direct reason causing such difference, since fixing this issue alone cannot stabilize Post-LN training. 
It implies that, besides unbalanced gradients, there exist other factors influencing model training greatly. 

\begin{figure*}[t]
\centering
\includegraphics[width=\linewidth]{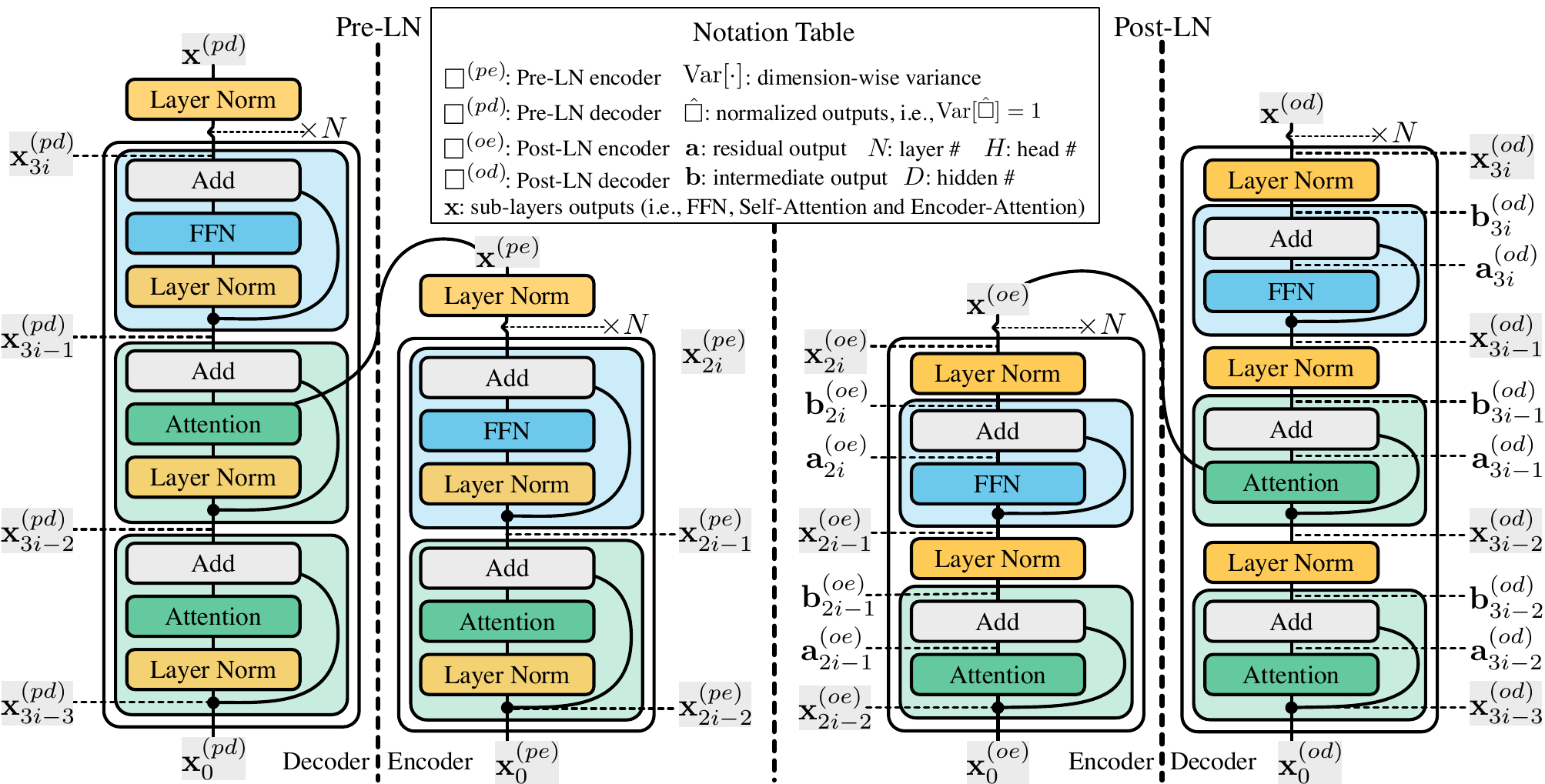}
\caption{The Architecture and notations of Pre-LN Transformers (Left) and Post-LN Transformers (Right).}
\label{fig:pre-post-diagram}
\end{figure*}
 
With further analysis, we recognize that for each Transformer residual block, the dependency on its residual branch\footnote{For a residual block $x+f(x)$, its shortcut output refers to $x$, its residual branch output refers to $f(x)$, and the dependency on its residual branch refers to $\frac{\Var[f(x)]}{\Var[x + f(x)]}$.} plays an essential role in training stability. 
First, we find that a Post-LN layer has a heavier dependency on its residual branch than a Pre-LN layer.
As in Figure~\ref{fig:layer-dependency}, at initialization, a Pre-LN layer has roughly the same dependency on its residual branch and any previous layer, whereas a Post-LN layer has a stronger dependency on its residual branch 
(more discussions are elaborated in Section~\ref{subsec:impact-ln}). 
We find that strong dependencies of Post-LN amplify fluctuations brought by parameter changes and destabilize the training (as in Theorem~\ref{theo:layer-dependency-and-shift} and Figure~\ref{fig:stability}). 
Besides, the loose reliance on residual branches in Pre-LN generally limits the algorithm's potential and often produces inferior models.  

In light of our analysis, we propose \our, an adaptive initialization method which retains the merits of Pre-LN stability without hurting the performance.
It restricts the layer dependency on its residual branches in the early stage and unleashes the model potential in the late stage.
We conduct experiments on IWSLT'14 De-En,  WMT'14 En-De, and WMT'14 En-Fr;
\our is more stable, converges faster, and achieves better performance. 
For example, without introducing any additional hyper-parameters, \our successfully stabilizes 72-layer Transformer training on WMT'14 En-Fr and achieves a 43.80 BLEU score.


\section{Preliminaries}

\begin{figure*}[t]
\centering
\includegraphics[width=\linewidth]{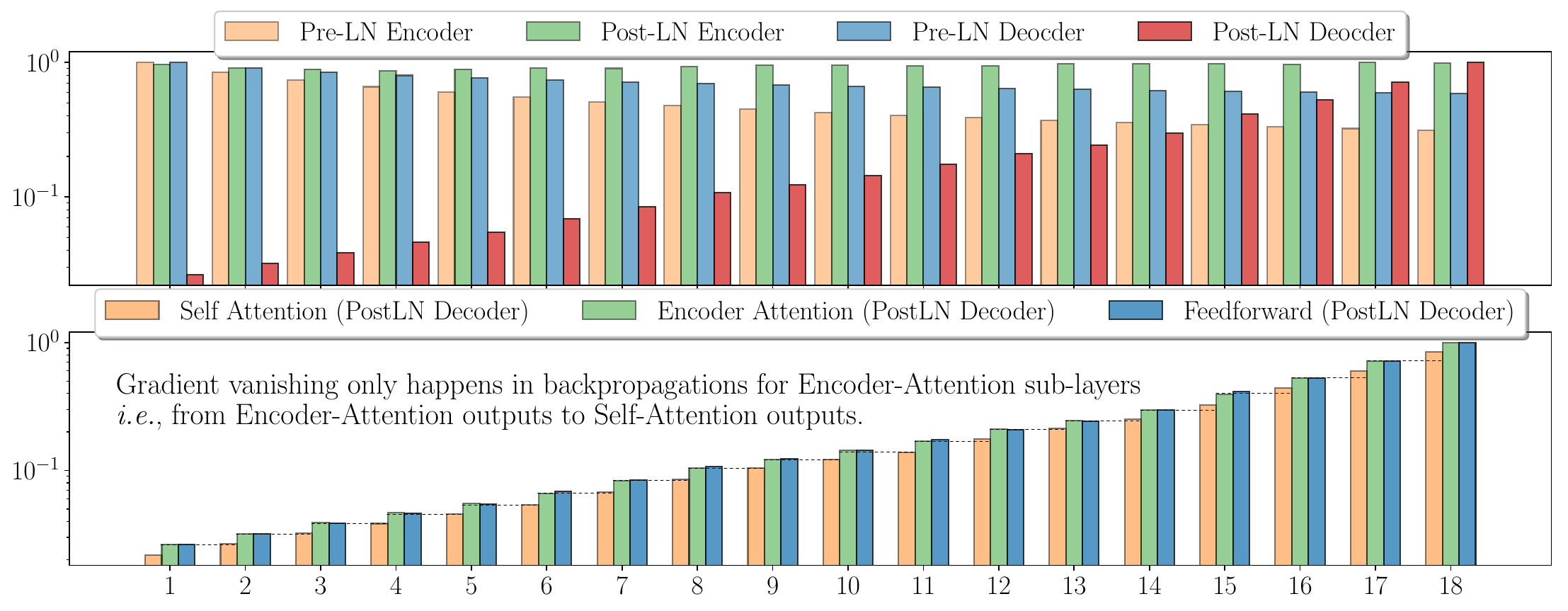}
\caption{Relative gradient norm histogram (on a log scale) of 18-layer Transformers on the WMT'14 En-De dataset, \ie, the gradient norm of sub-layer outputs, scaled by the largest gradient norm in the same network. 
}
\label{fig:gradient-histogram}
\end{figure*}

\smallsection{Transformer Architectures and Notations}
\label{subsec:architecture}
The Transformer architecture contains two types of sub-layers, \ie, Attention sub-layers and Feedforward (FFN) sub-layers. 
They are composed of mainly three basic modules~\cite{Vaswani2017AttentionIA}, \ie, Layer Norm ($\LN$), Multi-head Attention ($\ATT$), and Feedforward Network ($\FFN$).

As illustrated in Figure~\ref{fig:pre-post-diagram}, the Pre-LN Transformer and the Post-LN Transformer organize these modules differently. 
For example, a Pre-LN encoder organizes the Self-Attention sub-layer as $\xb\ep_{2i-1} = \xb\ep_{2i-2} + \SATT(\LN(\xb\ep_{2i-2}))$ and a Post-LN encoder as $\xb\eo_{2i-1} = \LN(\xb\eo_{2i-2} + \SATT(\xb\eo_{2i-2}))$, where $\xb^{(\cdot)}_{2i-2}$ is the input of the $i$-th Transformer layer and $\xb^{(\cdot)}_{2i-1}$ is the output of the $i$-th Self-Attention sub-layer.
Here, we refer  $\SATT(\LN(\xb\ep_{2i-2}))$ and $\SATT(\xb\eo_{2i-2})$ as the residual branches and their outputs as the residual outputs, in contrast to layer/sub-layer outputs, which integrates residual outputs and shortcut outputs. 

Notation elaborations are shown in Figure~\ref{fig:pre-post-diagram}.
In particular, we use superscripts to indicate network architectures (\ie, the Pre-LN Encoder), use subscripts to indicate layer indexes (top layers have larger indexes), all inputs and outputs are formulated as $\mbox{Sequence-Len} \times \mbox{Hidden-Dim}$.

\smallsection{Layer Norm}
Layer norm~\cite{Ba2016LayerN} plays a vital role in Transformer architecture. It is defined as $\LN(\xb) = \bgamma \frac{\xb - \mu }{\sigma} + \bnu$, where $\mu$ and $\sigma$ are the mean and standard deviation of 
$\xb$.

\smallsection{Feedforward Network}
Transformers use two-layer perceptrons
as feedforward networks,
\ie, $\FFN(\xb) = \phi(\xb \wf)\wff$, where $\phi(\cdot)$ is the non-linear function\footnote{Our analysis uses ReLU as the activation function, while \our can be applied to other non-linear functions.}, and $W^{(\cdot)}$ 
are parameters.

\smallsection{Multi-head Attention}
Multi-head Attentions allows the network to have multiple focuses in a single layer and plays a crucial role in many tasks~\cite{Chen2018TheBO}.
It is defined as (with $H$ heads):
$\ATT(\qb, \kb, \vb) = \sum_{h=1}^H \softmax(\qb \wq_h \wk_h \kb^T)\vb \wv_h \wo_h$,
where $\softmax$ is the row-wise softmax function and $W^{(\cdot)}_h$ are parameters. 
$\wq_h$ and $\wv_h$ are $D \times \frac{D}{H}$ matrices, $\wk_h$ and $\wo_h$ are $\frac{D}{H} \times D$ matrices, where $D$ is the hidden state dimension. 
Parameters without subscript refer the concatenation of all $H$-head parameters, \eg, $\wq = [\wq_1, \cdots, \wq_H]$. 
In Transformer,
this module is used in two different settings: Encoder-Attention ($\EATT(\xb) = \ATT(\xb, \xb^{(\cdot e)}, \xb^{(\cdot e)})$ and $\xb^{(\cdot e)}$ is the encoder output), and Self-Attention ($\SATT(\xb) = \ATT(\xb, \xb, \xb)$).


\section{Unbalanced Gradients}
\label{sec:gradient-distribution}

In this study, we strive to answer the question: \emph{what complicates Transformer training}.
Our analysis starts from the observation: Pre-LN training is more robust than Post-LN, while Post-LN is more likely to reach a better performance than Pre-LN. 
In a parameter grid search (as in Figure~\ref{fig:grid-search}), Pre-LN converges in all 15 settings, and Post-LN diverges in 7 out of 15 settings; when Post-LN converges, it outperforms Pre-LN in 7 out of 8 settings. 
We seek to reveal the underlying factor that destabilizes Post-LN training and restricts the performance of Pre-LN. 

\begin{figure*}[t]
\begin{minipage}{0.73\linewidth}
\includegraphics[width=\textwidth]{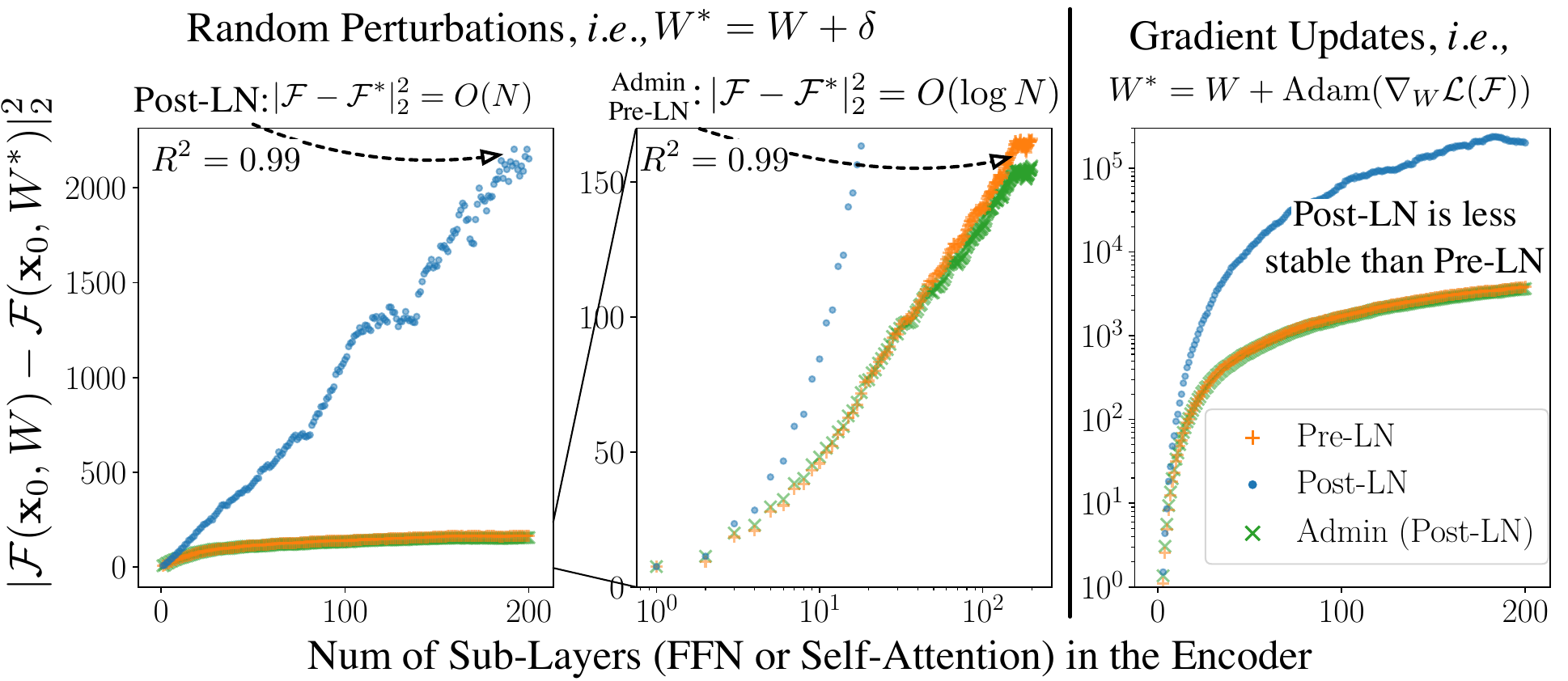}
\captionof{figure}{
Encoder output changes for parameter changes,
\ie, $|\cF(\xb_0, W) - \cF(\xb_0, W^*)|_2^2$ where $W^* - W$ is random perturbations (left) or gradient updates (right). 
Intuitively, very large $|\cF - \cF^*|$ indicates the training to be ill-conditioned.}
\label{fig:stability}
\end{minipage}
\,
\begin{minipage}{0.24\linewidth}
\includegraphics[width=\textwidth]{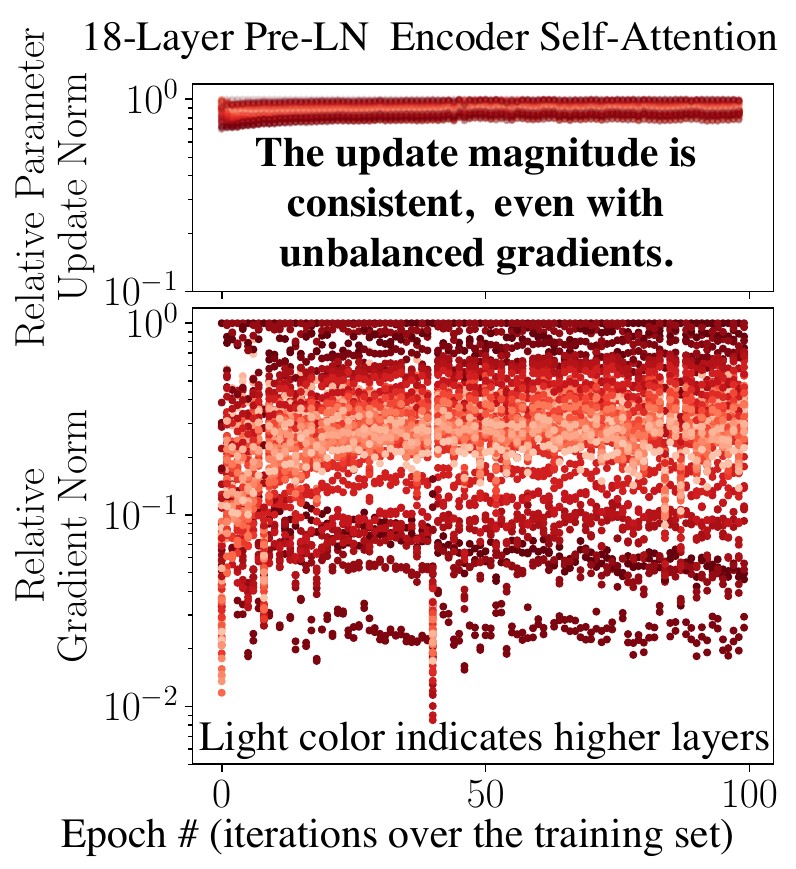}
 \captionof{figure}{Histogram of relative norm of gradient and $|W_{i+1} - W_{i}|$ where $W_i$ is the checkpoint saved after training for $i$ epochs.}
\label{fig:all_histogram}
\end{minipage}
\end{figure*}

In this section, we focus on the unbalanced gradients (\eg, gradient vanishing). 
We find that, although 
Post-LN suffers from gradient vanishing and Pre-LN does not, 
gradient vanishing is not the direct reason causing the instability of Post-LN. 
Specifically, we first theoretically and empirically establish that only Post-LN decoders suffer from gradient vanishing and Post-LN encoders do not.
We then observe that fixing the gradient vanishing issue alone cannot stabilize training.

\subsection{Gradients at Initialization}
\label{subsec:ini_grad}

As gradient vanishing can hamper convergence from the beginning, it has been regarded as the major issue causing unstable training. 
Also, recent studies show that this issue exists in the Post-LN Transformer, even after using residual connections~\cite{Xiong2019OnLN}. 
Below, we establish that only Post-LN decoders suffer from the gradient vanishing, and neither Post-LN encoders, Pre-LN encoders, nor Pre-LN decoders. 

\begin{table}[t]
\begin{center}
\small
\vspace{0.4cm}
\begin{tabularx}{\linewidth}{ *{4}{Y}}
\toprule
\textbf{Encoder} & \textbf{Decoder} & \textbf{Gradient} & \textbf{Training} \\ \midrule
Post-LN & Post-LN & Vanishing & Diverged \\
Post-LN & Pre-LN & \soutthick{Vanishing} & Diverged \\
Pre-LN & Pre-LN & \soutthick{Vanishing} & Converged \\
\bottomrule
\end{tabularx}
\end{center}
\caption{Changing decoders from Post-LN to Pre-LN fixes gradient vanishing, but does not stabilize model training successfully. Encoder/Decoder have 18 layers.}
\label{tbl:fixing-gradient-vanishing}
\end{table}

We use $\Delta \xb$ to denote gradients, \ie, $\Delta \xb = \frac{\partial \mathcal{L}}{\partial \xb}$ where $\mathcal{L}$ is the training objective.
Following previous studies~\cite{Glorot2010UnderstandingTD},
we analyze the gradient distribution at the very beginning of training and find only Encoder-Attention sub-layers in Post-LN suffers from gradient vanishing.  

First, we conduct analysis from a theoretical perspective. 
Similar to \citet{Xiong2019OnLN}, we establish that Pre-LN networks do not suffer from gradient vanishing (as elaborated in Appendix~\ref{subsec:preln-analysis}). 
Unlike \citet{Xiong2019OnLN}, we recognize that not all Post-LN networks suffer from gradient vanishing. 
As in Theorem~\ref{theo:post-ln-encoder-gradient}, we establish that Post-LN Encoder networks do not suffer from gradient vanishing. 
Detailed derivations are elaborated in Appendix~\ref{subsec:postln-encoder-analysis}. 

\begin{theorem}
For Post-LN Encoders, if $\bgamma$ and $\bnu$ in the Layer Norm are initialized as $1$ and $0$ respectively; all other parameters are initialized by symmetric distributions with zero mean; $\xb_{i}^{(oe)}$ and $\Delta \xb_i^{(oe)}$ are subject to symmetric distributions with zero mean; the variance of $\xb_{i}^{(oe)}$ is $1$ (\ie, normalized by Layer Norm); $\Delta \xb_i^{(oe)}$ and the derivatives of modules in $i$-th sub-layer are independent, we have $\Var[\Delta \xb_{i-1}] \geq \Var[\Delta \xb_{i}]$.
\label{theo:post-ln-encoder-gradient}
\end{theorem}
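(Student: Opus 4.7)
The plan is to propagate the variance analysis backward through a single Post-LN encoder sub-layer and show the gradient's second moment cannot shrink. Write the sub-layer generically as $\xb_i^{(oe)} = \LN(\mathbf{y}_i)$ with $\mathbf{y}_i = \xb_{i-1}^{(oe)} + f_i(\xb_{i-1}^{(oe)})$, where $f_i$ is either $\SATT$ or $\FFN$. The chain rule gives
\[
\Delta \mathbf{y}_i \;=\; J_{\LN}^\top\, \Delta \xb_i^{(oe)}, \qquad \Delta \xb_{i-1}^{(oe)} \;=\; \Delta \mathbf{y}_i \;+\; J_{f_i}^\top\, \Delta \mathbf{y}_i,
\]
where $J_{\LN}$ and $J_{f_i}$ are the Jacobians evaluated at $\mathbf{y}_i$ and $\xb_{i-1}^{(oe)}$. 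So the proof splits into (a) tracking how the LayerNorm step rescales variance, and (b) showing the residual-branch piece exactly compensates that rescaling.

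First I would handle the LayerNorm step. With $\bgamma = 1$ and $\bnu = 0$, $J_{\LN}$ takes the form $\sigma(\mathbf{y}_i)^{-1}\bigl(I - \tfrac{1}{D}\mathbf{1}\mathbf{1}^\top - \tfrac{1}{D}\hat{\mathbf{y}}\hat{\mathbf{y}}^\top\bigr)$ with $\hat{\mathbf{y}}$ the normalized input. Under the symmetric-law hypothesis on $\mathbf{y}_i$ and its independence from $\Delta \xb_i^{(oe)}$, the two rank-one corrections contribute subdominantly in second moment, yielding $\Var[\Delta \mathbf{y}_i] = \Var[\Delta \xb_i^{(oe)}]/\Var[\mathbf{y}_i]$ to leading order. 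Because $\xb_{i-1}^{(oe)}$ and $f_i(\xb_{i-1}^{(oe)})$ are independent and zero-mean, $\Var[\mathbf{y}_i] = 1 + c_i$ where $c_i := \Var[f_i(\xb_{i-1}^{(oe)})]$ is the forward variance gain of the branch (and the ``1'' uses the normalization assumption $\Var[\xb_{i-1}^{(oe)}] = 1$).

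Second, by the assumed independence of $\Delta \mathbf{y}_i$ from $J_{f_i}$ and zero-mean symmetry of both summands,
\[
\Var[\Delta \xb_{i-1}^{(oe)}] \;=\; \Var[\Delta \mathbf{y}_i] \;+\; \Var[J_{f_i}^\top \Delta \mathbf{y}_i].
\]
The crucial identity is a forward-backward symmetry $\Var[J_{f_i}^\top \Delta \mathbf{y}_i] = c_i \, \Var[\Delta \mathbf{y}_i]$: a standard Xavier-style second-moment computation with the zero-mean symmetric initializations of $\wf, \wff$ for $\FFN$ (the ReLU contributes the same $1/2$ factor in both directions) and of $\wq, \wk, \wv, \wo$ for $\SATT$ (after linearizing the softmax about its near-uniform initialization baseline) makes the backward gain equal to the forward gain. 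Combining the two displays,
\[
\Var[\Delta \xb_{i-1}^{(oe)}] \;=\; (1 + c_i)\,\Var[\Delta \mathbf{y}_i] \;=\; \frac{(1+c_i)\,\Var[\Delta \xb_i^{(oe)}]}{1+c_i} \;=\; \Var[\Delta \xb_i^{(oe)}],
\]
which implies the claimed inequality $\Var[\Delta \xb_{i-1}^{(oe)}] \geq \Var[\Delta \xb_i^{(oe)}]$ (tight to leading order).

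The main obstacle is the LayerNorm step: its Jacobian is not a pure rescaling but carries two rank-one corrections from mean- and variance-removal, so the ``multiply by $1/\sigma$'' shortcut must be justified by averaging those corrections against the symmetric law of $\mathbf{y}_i$. A secondary delicate point is verifying the forward-backward gain identity for $\SATT$ with multi-head softmax; this reduces to the standard second-moment argument once the softmax is linearized around its uniform baseline, which is justified because at initialization the pre-softmax scores are $O(1/\sqrt{D})$.
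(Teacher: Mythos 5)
Your proposal is correct and follows essentially the same route as the paper's own proof: treat the LayerNorm (with $\bgamma=1,\bnu=0$ and zero mean) as a rescaling by $1/\sigma_b$, note $\sigma_b^2 = 1 + \Var[f_i(\xb_{i-1})]$ by independence of shortcut and residual branch, and then use a He-style forward--backward second-moment identity so that the branch's backward gain matches its forward gain and cancels the $1/\sigma_b^2$ factor exactly, yielding $\Var[\Delta\xb_{i-1}] \geq \Var[\Delta\xb_i]$. The only cosmetic differences are that the paper works out the FFN and self-attention cases separately with explicit constants (using $P_h = \E[\softmax^2]$ in both directions and simply dropping the query/key score-path gradient via an inequality, rather than linearizing the softmax), while you unify the two sub-layer types and are slightly more explicit about the rank-one corrections in the LayerNorm Jacobian, which the paper ignores outright.
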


To make sure that the assumptions of Theorem~\ref{theo:layer-dependency-and-shift} match the real-world situation, we further conduct empirical verification. 
At initialization, we calculate $||\Delta \xb^{(\cdot)}_{i}||_2$ for 18-layer Transformers\footnote{Note if $\E[\Delta \xb^{(p\cdot)}_{i-1}] = 0$, $\Var[\Delta \xb^{(p\cdot)}_{i-1}] \approx |\Delta \xb^{(p\cdot)}_{i-1}|_2^2$.} and visualize
$\frac{||\Delta \xb^{(\cdot)}_{i}||_2}{\max_j ||\Delta \xb^{(\cdot)}_{j}||_2}$ 
in Figure~\ref{fig:gradient-histogram}.
It verifies that only Post-LN decoders suffer from the gradient vanishing. 
Besides, we can observe that the dropping of gradient norms mostly happens in the backpropagation from encoder-attention outputs (encoder-attention bars) to its inputs (self-attention bars, since the output of self-attention is the input of encoder-attention). 
This pattern is further explained in Appendix~\ref{subsec:postln_decoder_analysis}. 

\subsection{Impact of the Gradient Vanishing}

Now, we explore whether gradient vanishing is the direct cause of training instability.

First, we design a controlled experiment to show the relationship between gradient vanishing and training stability. 
We construct a hybrid Transformer by combining a Post-LN encoder and a Pre-LN decoder. 
As in Section~\ref{subsec:ini_grad}, only Post-LN decoders suffer from gradient vanishing, but not Post-LN encoders. 
Therefore, this hybrid Transformer does not suffer from gradient vanishing. 
As shown in Table~\ref{tbl:fixing-gradient-vanishing}, fixing gradient vanishing alone (\ie, changing Post-LN decoders to Pre-LN decoders) fails to stabilize model training. 
This observation provides evidence supporting that the gradient vanishing issue is not the direct cause of unstable Post-LN training. 

Moreover, we observe
that gradients of all attention modules are unbalanced, while adaptive optimizers mostly address this issue.
As in Figure~\ref{fig:all_histogram}, adaptive optimizers successfully assign different learning rates to different parameters and lead to consistent update magnitudes even with unbalanced gradients.
It explains why the standard SGD fails in training Transformers (\ie, lacking the ability to handle unbalanced gradients) and necessitates using adaptive optimizers. 
More discussions are included in Appendix~\ref{subsec:attention_gradients}.


\section{Instability from Amplification Effect}
\label{sec:layer-dependency}

We find that unbalanced gradients are not the root cause of the instability of Post-LN, which implies the existence of other factors influencing model training.
Now, we go beyond gradient vanishing and introduce the \emph{amplification effect}.
Specifically, we first examine the difference between Pre-LN and Post-LN, including their early-stage and late-stage training. 
Then, we show that Post-LN's training instability is attributed to layer dependency's amplification effect, which intensifies gradient updates and destabilizes training.

\begin{figure}[t]
\centering
\includegraphics[width=\linewidth]{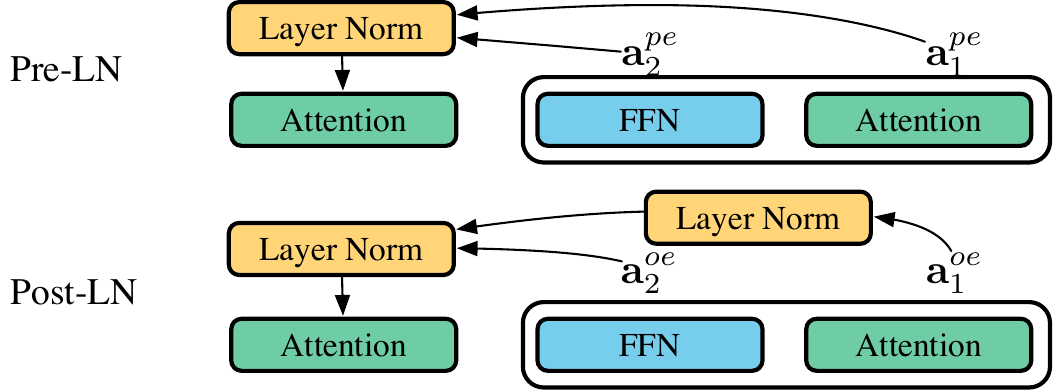}
\caption{
The major difference between Pre-LN and Post-LN is the position of layer norms. 
}
\label{fig:layer-norm-position}
\end{figure}

\subsection{Impact of Layer Norms Positions}
\label{subsec:impact-ln}

As described in Section~\ref{subsec:architecture}, both Pre-LN and Post-LN employ layer norm to regularize inputs and outputs. 
Different residual outputs are aggregated and normalized in residual networks before serving as inputs of other layers (\ie, residual outputs will be scaled to ensure the integrated input to have a consistent variance). 
To some extend, layer norm treats the variance of residual outputs as weights to average them.
For example, for Post-LN Self-Attention, we have
$\xb^{(o\cdot)}_{2i-1} = \frac{\xb^{(o\cdot)}_{2i-2} + \ab^{(o\cdot)}_{2i-1}}{\sqrt{\Var[\xb^{(o\cdot)}_{2i-2}] + \Var[\ab^{(o\cdot)}_{2i-1}]}}$
at initialization. 
Larger $\Var[\ab^{(o\cdot)}_{2i-2}]$ not only increases the proportion of $\ab^{(o\cdot)}_{2i-2}$ in $\xb^{(o\cdot)}_{2i-2}$ but decreases the proportion of other residual outputs.
Intuitively, this is similar to the weight mechanism of the weighted average.

The position of layer norms is the major difference between Pre-LN and Post-LN and makes them aggregate residual outputs differently (\ie, using different weights). 
As in Figure~\ref{fig:layer-norm-position}, all residual outputs in Pre-LN are only normalized once before feeding into other layers (thus only treating residual output variances as weights); in Post-LN, most residual outputs are normalized more than once, and different residual outputs are normalized for different times. 
For example, if all layers are initialized in the same way, output variances of different Pre-LN residual branches would be similar, and the aggregation would be similar to the simple average. 
Similarly, for Post-LN, nearby residual outputs are normalized by fewer times than others, thus having relatively larger weights. 
We proceed to calculate and analyze these weights to understand the impact of layer norm positions.

First, we use $\hab_i$ to refer $\frac{\ab_i}{\sqrt{\Var{\ab_i}}}$ (\ie, normalized outputs of $i$-th residual branch) and $\hxb_i$ to refer
$\frac{\xb_i}{\sqrt{\Var{\xb_i}}}$ (\ie, normalized outputs of $i$-th layer or normalized inputs of ($i$+1)-th residual branch).
Then, we describe their relationships as $\hxb_{i} = \sum_{j\leq i} \beta_{i, j} \hab_j$, where $\beta_{i, j}$ integrates scaling operations of all layer norms (including $\sqrt{\Var[\ab_i]}$). 
For example, Pre-LN sets $\beta_{i, j} = \frac{\sqrt{\Var[\ab_j]}}{\sqrt{\Var[\sum_{k\le i} \ab_k]}}$. 
Intuitively, $\beta_{i, j}$ describes the proportion of $j$-th residual branch outputs in $i$-th layer outputs, thus reflects the dependency among layers.

We visualize $\beta_{i, j}$ in Figure~\ref{fig:layer-dependency}. 
For a Post-LN layer, its outputs rely more on its residual branch from the initialization to the end.
At initialization, Pre-LN layer outputs have roughly the same reliance on all previous residual branches. 
As the training advances, each layer starts to rely more on its own residual outputs. 
However, comparing to Post-LN, Pre-LN layer outputs in the final model still has less reliance on their residual branches. 

\begin{figure}[t]
\centering
\includegraphics[width=\linewidth]{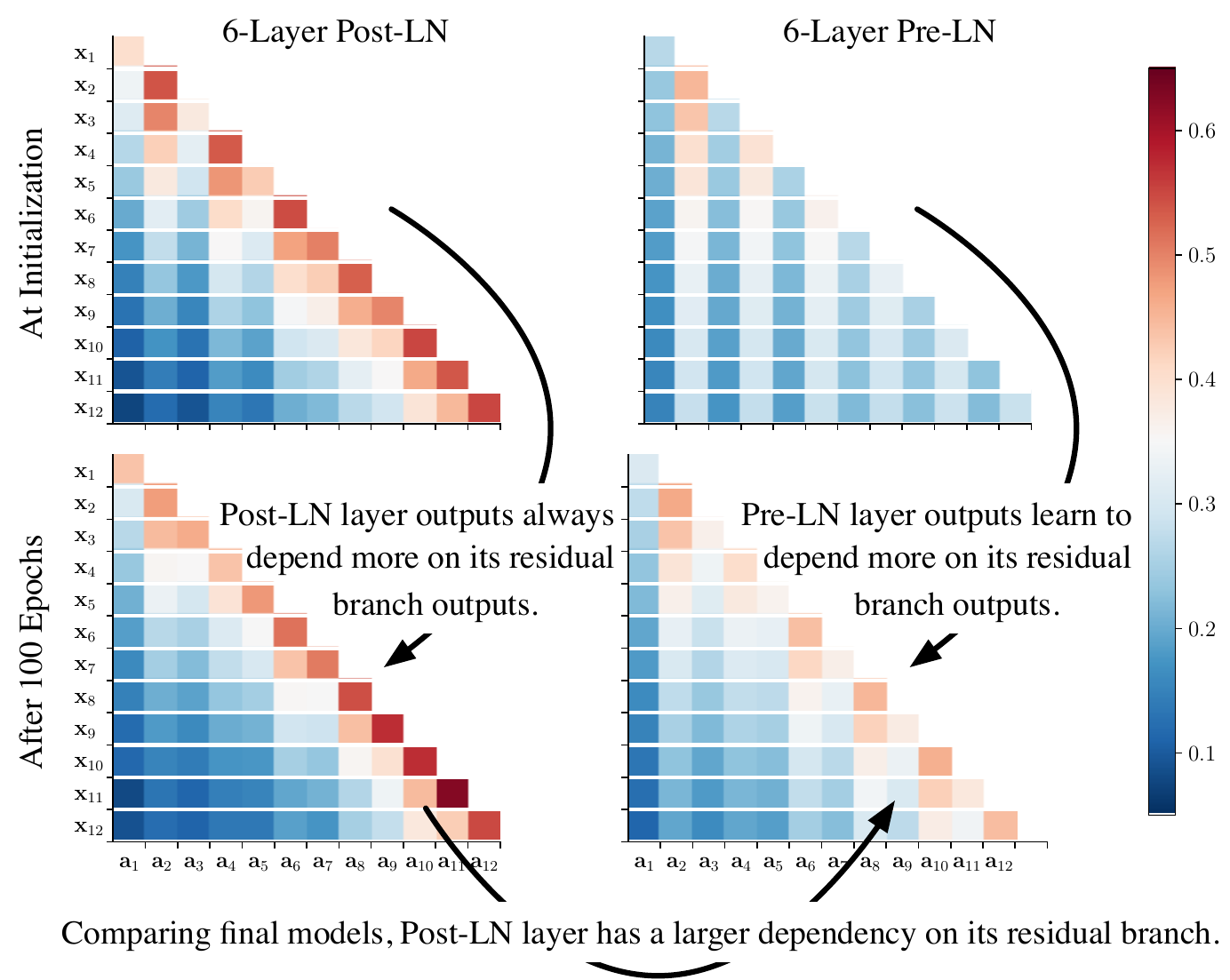}
\caption{
$\beta_{i, j}$ in 6-Layer Post-LN and Pre-LN on the WMT-14 En-De dataset (contains 12 sub-layers).
}
\label{fig:layer-dependency}
\end{figure}

Intuitively, it is harder for Pre-LN layers to depend too much on their own residual branches.
In Pre-LN, layer outputs (\ie, $\xb^{(p\cdot)}_i$) are not normalized, and their variances are likely to be larger for higher layers\footnote{If $\ab_0$ and $\ab_1$ are independent, $\Var[\ab_0 + \ab_1] = \Var[\ab_0] + \Var[\ab_1]$; also, in our experiments $\Var[\xb^{(p\cdot)}_i]$ increases as $i$ becomes larger}.
Since $\beta_{i, i} = \frac{\sqrt{\Var[\ab_i]}}{\sqrt{\Var[\xb^{(p\cdot)}_{i-1} +\ab_i]}}$, $\beta_{i, i}$ is likely to be smaller for higher layers, which restricts $i$-th layer outputs from depending too much on its residual branch and inhibits the network from reaching its full potential.
In other words, Pre-LN restricts the network from being too deep 
(\ie, if it is hard to distinguish $\xb^{(p\cdot)}_i$ and $\xb^{(p\cdot)}_{i+1}$,
appending one layer would be similar to doubling the width of the last layer),
while Post-LN gives the network the choice of being wider or deeper.

\subsection{Amplification Effect at Initialization}

Although depending more on residual branches allows the model to have a larger potential, it amplifies the fluctuation brought by parameter changes. 
For a network $\hxb = \cF(\xb_0, W)$ where $\xb_0$ is the model input and $W$ is the parameter, 
the output change caused by parameter perturbations is $\Var[\cF(\xb_0, W) - \cF(\xb_0, W^*)]$, where $W^* = W + \delta$. 
Its relationship with $N$ is described in Theorem~$\ref{theo:layer-dependency-and-shift}$, and the derivation is elaborated in Appendix~\ref{appendix:theo2}. 
\begin{theorem}
Consider a $N$-layer Transformer $\hxb = \cF(\hxb_0, W)$ at initialization, where $\hxb_0$ is the input and $W$ is the parameter. 
If the layer dependency stays the same after a parameter change 
(\ie, $\beta_{i, j}$ has the same value after changing $W$ to $W^*$, where $W$ is randomly initialized and $\delta = W^* - W$ is independent to $W$), the output change (\ie, $\Var[\cF(\xb_0, W) - \cF(\xb_0, W^*)]$) can be estimated as $\sum_{i=1}^N \beta^2_{i, i} C$ where $C$ is a constant.
\label{theo:layer-dependency-and-shift}
\end{theorem}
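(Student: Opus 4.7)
The plan is to exploit the additive decomposition $\hxb_i = \sum_{j \le i} \beta_{i,j} \hab_j$ from Section~\ref{subsec:impact-ln} together with the hypothesis that the $\beta_{i,j}$ are unchanged by the perturbation. Writing $\cF(\hxb_0, W) - \cF(\hxb_0, W^*) = \sum_{j=1}^{N} \beta_{N,j}(\hab_j - \hab_j^*)$ and using that the zero-mean symmetric initialization of $W$ together with the independence of $\delta = W^* - W$ from $W$ make the individual differences $\hab_j - \hab_j^*$ approximately uncorrelated across layers, I would first reduce the claim to $\Var[\cF(\hxb_0, W) - \cF(\hxb_0, W^*)] = \sum_j \beta_{N,j}^2 \Var[\hab_j - \hab_j^*]$.

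Next, to see why the $\beta_{N,j}^2$ weights collapse into $\beta_{j,j}^2$, I would pass to a one-step recursion. Combining the decomposition at two consecutive layers and using that both $\hxb_{i-1}$ and $\hab_i$ are already normalized to unit variance yields the aggregation identity
\[
\hxb_i \;=\; \sqrt{1-\beta_{i,i}^2}\;\hxb_{i-1} \;+\; \beta_{i,i}\,\hab_i,
\]
valid for both Pre-LN and Post-LN weightings (it is exactly what forces $\Var[\hxb_i] = 1$). Subtracting the perturbed version with unchanged $\beta$'s, taking variance, and invoking the independence assumptions gives the scalar recursion $V_i = (1-\beta_{i,i}^2)\,V_{i-1} + \beta_{i,i}^2\,C_i$ for $V_i := \Var[\hxb_i - \hxb_i^*]$ and $C_i := \Var[\hab_i - \hab_i^*]$. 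Unrolling with $V_0 = 0$ produces $V_N = \sum_{i=1}^{N} \beta_{i,i}^2\,C_i \prod_{j=i+1}^{N}(1-\beta_{j,j}^2)$, which already has the stated shape $\sum_i \beta_{i,i}^2 \cdot (\text{something})$.

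The main obstacle is closing the gap between this exact expression and the stated $\sum_i \beta_{i,i}^2\,C$. I would argue this in two parts. First, $C_i$ should be layer-independent: linearizing $\hab_i = g_i(\hxb_{i-1}, W_i)/\sqrt{\Var[a_i]}$ in $\delta_i$ and exploiting the assumed symmetric initialization plus the independence of $\delta_i$ from $W_i$ makes the direct parameter-perturbation contribution scale-invariant across layers once divided by $\Var[a_i]$, while the indirect contribution from $\hxb_{i-1} - \hxb_{i-1}^*$ is damped by the unit-variance normalization intrinsic to $\hab_i$. Second, since $0 \le 1-\beta_{j,j}^2 \le 1$, the trailing product satisfies $\prod_{j>i}(1-\beta_{j,j}^2) \le 1$, so the identity immediately gives the upper bound $V_N \le C\sum_i \beta_{i,i}^2$, and in the regime targeted by the paper (moderate $\beta_{j,j}$ throughout) this product factor is close to one and can be folded into $C$. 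The technically delicate step is the first part: showing that the propagated perturbation does not amplify through repeated compositions of nonlinear, normalized residual branches, treating multi-head attention and feed-forward sub-layers uniformly, is where I expect the proof to require the most care.
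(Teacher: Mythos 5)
Your setup---the aggregation identity $\hxb_i=\sqrt{1-\beta_{i,i}^2}\,\hxb_{i-1}+\beta_{i,i}\hab_i$ and the one-step variance recursion $V_i=(1-\beta_{i,i}^2)V_{i-1}+\beta_{i,i}^2C_i$ with $C_i=\Var[\hab_i-\hab_i^*]$---is exactly the paper's starting point (its Equation~\ref{eqn:recurrent}, modulo a typo there). The genuine gap is in how you treat $C_i$. You assert that the contribution of the propagated perturbation $\hxb_{i-1}-\hxb_{i-1}^*$ to $\hab_i-\hab_i^*$ is ``damped by the unit-variance normalization,'' so that $C_i\approx C$ is a layer-independent constant, and that the trailing product $\prod_{j>i}(1-\beta_{j,j}^2)$ is close to one and can be folded into $C$. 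Neither holds, and together they lead to a conclusion inconsistent with the theorem: plugging a constant $C_i=C$ into your unrolled formula $V_N=\sum_i\beta_{i,i}^2C_i\prod_{j>i}(1-\beta_{j,j}^2)$ gives, for the Pre-LN weights $\beta_{i,i}^2=1/i$, $\prod_{j>i}(1-1/j)=i/N$ and hence $V_N=C$ (bounded in $N$, not $O(\log N)$ as in Corollary~\ref{cor:pre-ln}), and for constant Post-LN weights $\beta_{i,i}^2=\beta^2$ it gives $V_N\le C$ (not $O(N)$ as in Corollary~\ref{cor:post-ln}). So the product factor is emphatically not near one in either regime; it is instead exactly compensated by the growth of $C_i$ with depth, which your argument suppresses.

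The missing idea is the paper's handling of $\Var[\hab_i-\hab_i^*]$: decompose $\hab_i-\hab_i^*=\bigl(\cG_i(\hxb_{i-1},W_i)-\cG_i(\hxb^*_{i-1},W_i)\bigr)+\bigl(\cG_i(\hxb^*_{i-1},W_i)-\cG_i(\hxb^*_{i-1},W_i^*)\bigr)$, treat the two parts as independent, identify the second with the constant $C$, and---crucially---argue that the first has variance approximately $\Var[\hxb_{i-1}-\hxb^*_{i-1}]$, because the normalized sub-layer (linear weights with ReLU) maps a unit-variance input to a unit-variance output and therefore approximately \emph{preserves} the variance of input differences rather than damping it. This gives $C_i\approx V_{i-1}+C$, which cancels the $(1-\beta_{i,i}^2)$ factor in your recursion, leaving $V_i\approx V_{i-1}+\beta_{i,i}^2C$ and hence $V_N\approx\sum_{i=1}^N\beta_{i,i}^2C$. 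Your instinct that the delicate step is controlling how the perturbation propagates through the normalized residual branches is right, but the theorem needs the perturbation to be neither amplified \emph{nor attenuated} there: preservation is precisely what produces the accumulation $\sum_i\beta_{i,i}^2$, whereas the damping you posit would make the output change depth-independent and void the corollaries.
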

If $\Var[\ab_i]$ is the same for all layers, Pre-LN sets $\beta^2_{i, i}$ as $1/i$, and Post-LN sets $\beta^2_{i, i}$ as a constant.
Thus, we have Corollary~\ref{cor:pre-ln} and \ref{cor:post-ln} as below.
\begin{corollary}
For a $N$-layer Pre-LN $\cF$, we have $\Var[\cF(\xb_0, W) - \cF(\xb_0, W^*)] = O(\log N)$.
\label{cor:pre-ln}
\end{corollary}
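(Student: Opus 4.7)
My plan is to chain Theorem~\ref{theo:layer-dependency-and-shift} with the explicit expression for $\beta_{i,j}$ in Pre-LN, and then recognize the remaining sum as a harmonic series.

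First, I would invoke Theorem~\ref{theo:layer-dependency-and-shift} to reduce the corollary to the estimate
\[
\Var[\cF(\xb_0, W) - \cF(\xb_0, W^*)] \;=\; C\sum_{i=1}^{N}\beta_{i,i}^{2},
\]
so that only the diagonal weights $\beta_{i,i}$ of the Pre-LN aggregation are relevant. Next, I would substitute the Pre-LN formula $\beta_{i,j} = \sqrt{\Var[\ab_j]}/\sqrt{\Var[\sum_{k\le i}\ab_k]}$ stated in Section~\ref{subsec:impact-ln}. Under the stated assumption that $\Var[\ab_i]$ is the same constant $v$ across all layers, and using that at initialization the residual outputs $\ab_1,\dots,\ab_i$ have zero mean and are uncorrelated (because the parameters of different residual branches are sampled independently from symmetric zero-mean distributions, which is the same independence hypothesis already exploited for the analyses in Sections~\ref{subsec:ini_grad} and~\ref{subsec:impact-ln}), I would conclude $\Var[\sum_{k\le i}\ab_k] = i\,v$, and therefore $\beta_{i,i}^{2} = v/(iv) = 1/i$.

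Substituting this into the reduced expression gives
\[
\Var[\cF(\xb_0, W) - \cF(\xb_0, W^*)] \;=\; C\sum_{i=1}^{N}\frac{1}{i} \;=\; C\,H_N,
\]
and the standard bound $H_N \le 1 + \ln N$ on the harmonic number immediately yields the $O(\log N)$ claim.

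The main obstacle I anticipate is not the arithmetic but the justification of the uncorrelatedness step: making sure that the statement $\Var[\sum_{k\le i}\ab_k] = \sum_{k\le i}\Var[\ab_k]$ is legitimate at initialization. This is the same kind of zero-mean, symmetric-initialization argument that underlies the derivation of $\beta_{i,j}$ and of Theorem~\ref{theo:post-ln-encoder-gradient}, so I would lean on those existing hypotheses rather than re-derive the independence of residual branches from scratch. Once that is in place, everything else collapses to recognizing the harmonic sum, and no further estimates are required.
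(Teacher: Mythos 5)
Your proposal matches the paper's own argument: the paper likewise derives the corollary by plugging the Pre-LN weights into Theorem~\ref{theo:layer-dependency-and-shift}, noting that when all residual-branch variances are equal (and branches are independent at initialization, as already assumed in the theorem's proof) one gets $\beta_{i,i}^2 = 1/i$, so the output change is $C\sum_{i=1}^N 1/i = O(\log N)$. Your extra care in justifying the uncorrelatedness of residual outputs is exactly the independence-at-initialization hypothesis the paper relies on, so the approach is essentially identical and correct.
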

\begin{corollary}
For a $N$-layer Post-LN $\cF$, we have $\Var[\cF(\xb_0, W) - \cF(\xb_0, W^*)] = O(N)$.
\label{cor:post-ln}
\end{corollary}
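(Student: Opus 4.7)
The plan is to derive Corollary~\ref{cor:post-ln} as a direct specialization of Theorem~\ref{theo:layer-dependency-and-shift}. Since the theorem already reduces the output change to $\sum_{i=1}^N \beta_{i,i}^2 C$, the only remaining task is to pin down the value of $\beta_{i,i}$ for the Post-LN architecture under the stated initialization assumptions, and then sum the series.

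First, I would unpack the Post-LN recursion at initialization. With $\bgamma = 1$, $\bnu = 0$, and independence between the shortcut input and the residual output, layer norm guarantees $\Var[\xb_i^{(o\cdot)}] = 1$ for every $i$. Writing the recursion in the normalized variables introduced before Figure~\ref{fig:layer-dependency}, we have
\begin{equation*}
\hxb_i \;=\; \frac{\xb_{i-1} + \ab_i}{\sqrt{\Var[\xb_{i-1}] + \Var[\ab_i]}} \;=\; \frac{\hxb_{i-1}\sqrt{\Var[\xb_{i-1}]} + \hab_i \sqrt{\Var[\ab_i]}}{\sqrt{\Var[\xb_{i-1}]+\Var[\ab_i]}}.
\end{equation*}
Under the corollary's hypothesis that $\Var[\ab_i]$ is identical across layers (and $\Var[\xb_{i-1}]=1$ in Post-LN), this simplifies to $\hxb_i = \tfrac{1}{\sqrt{1+v}}\,\hxb_{i-1} + \tfrac{\sqrt{v}}{\sqrt{1+v}}\,\hab_i$ with $v := \Var[\ab_i]$. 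Reading off the coefficient of $\hab_i$ gives $\beta_{i,i}^2 = \tfrac{v}{1+v}$, a constant independent of $i$.

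Second, I would plug this back into Theorem~\ref{theo:layer-dependency-and-shift}: the output change is $\sum_{i=1}^N \beta_{i,i}^2 C = N \cdot \tfrac{v}{1+v} C$, which is $\Theta(N)$ and in particular $O(N)$, yielding the claim. It is worth contrasting with Pre-LN to make clear why the two corollaries bifurcate: in Pre-LN the layer outputs are not renormalized, so $\Var[\xb_{i-1}^{(p\cdot)}]$ grows linearly in $i$ (as noted in the paragraph before Theorem~\ref{theo:layer-dependency-and-shift}), forcing $\beta_{i,i}^2 \sim 1/i$ and giving a harmonic sum of order $\log N$.

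The only subtle step is the constancy of $\Var[\ab_i]$ across $i$. This is not a theorem-level assumption but is already granted in the sentence introducing the two corollaries ("If $\Var[\ab_i]$ is the same for all layers"), and it is consistent with the symmetric, identically distributed initialization used throughout Section~\ref{sec:layer-dependency}. I would state this explicitly as the single standing hypothesis of the corollary and note that any bounded variation in $\Var[\ab_i]$ would only change the constant $C$ without affecting the $O(N)$ rate, so the conclusion is robust. No further obstacle arises; once $\beta_{i,i}^2$ is shown to be bounded below and above by positive constants, the linear growth in $N$ follows immediately from the arithmetic sum.
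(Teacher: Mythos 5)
Your proof is correct and follows essentially the same route as the paper, which derives Corollary~\ref{cor:post-ln} directly from Theorem~\ref{theo:layer-dependency-and-shift} by noting that under the stated hypothesis ($\Var[\ab_i]$ equal across layers, with $\Var[\xb_{i-1}]=1$ enforced by layer norm) Post-LN gives a constant $\beta_{i,i}^2$, so the sum $\sum_{i=1}^N \beta_{i,i}^2 C$ grows linearly in $N$. Your explicit computation $\beta_{i,i}^2 = v/(1+v)$ simply fills in the constant the paper leaves implicit.
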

They show that, since Post-LN relies more on residual branches than Pre-LN (\ie, has a larger $\beta^2_{i, i}$), the perturbation is amplified to a larger magnitude. 
To empirically verify these relationships, we calculate $|\cF(\xb_0, W) - \cF(\xb_0, W^*)|^2_2$ for Pre-LN and Post-LN and visualize the results in Figure~\ref{fig:stability}. 
In Corollary~\ref{cor:post-ln}, $N$ is linearly associated with  $|\mathcal{F} - \mathcal{F}^*|_2^2$ for Post-LN; and in Corollary~\ref{cor:pre-ln}, $\log N$ is linearly associated with 
$|\mathcal{F} - \mathcal{F}^*|_2^2$ for Pre-LN.
These relationships match the observation in our experiments (as in Figure~\ref{fig:stability}).
For further verification, we measure their correlation magnitudes by $R^2$ and find $R^2 = 0.99$ in both cases. 

Moreover, we replace the random noise $\delta$ with optimization updates (\ie, setting $W^* = W + {\mbox{Adam}}(\Delta W)$, where ${\mbox{opt}}(\cdot)$ is update calculated by the Adam optimizer) and visualize output shifts. 
This replacement makes the correlation between $|\mathcal{F} - \mathcal{F}^*|_2^2$ and $N$ (for Post-LN) or $\log N$ (for Pre-LN) to be weaker (\ie, $R^2 = 0.75$).
Still, as in Figure~\ref{fig:stability}, the output shift $|\mathcal{F} - \mathcal{F}^*|_2^2$ for Post-LN is larger than Pre-LN by multiple magnitudes. 

Intuitively, large output shifts would destabilize the training~\cite{Li2018VisualizingTL}.
Also, as elaborated in Appendix~\ref{appendix:theo2}, the constant $C$ in Theorem~\ref{theo:layer-dependency-and-shift} is related to network derivatives and would be smaller as training advances, which explains why warmup is also helpful for the standard SGD. 
Therefore, we conjecture it is the large output shift of Post-LN results in unstable training.
We proceed to stabilize Post-LN by controlling the dependency on residual branches in the early stage of training.

\subsection{\textit{Admin} -- Adaptive Model Initialization}
\label{subsec:admin}

In light of our analysis, we add additional parameters (\ie, $\bomega$) to control residual dependencies of Post-LN and stabilize training by adaptively initializing  $\bomega$
to ensure an $O(\log N)$ output change. 

Due to different training configurations and model specificities (\eg, different models may use different activation functions and dropout ratios), it is hard to derive a universal initialization method. 
Instead, we decompose model initialization into two phrases: \textit{Profiling} and \textit{Initialization}. 
Specifically, \our adds new parameters $\bomega$ and constructs its i-th sub-layer as $\xb_{i} = \LN (\bbb_i)$, where $\bbb_i=\xb_{i-1} \cdot \bomega_i + f_i (\xb_{i-1})$, $\bomega_i$ is a $D$-dimension vector and $\cdot$ is element-wise product. 
Then the \textit{Profiling} phrase and \textit{Initialization} phrase are:

\smallsection{Profiling}
After initializing the network with a standard method (initializing $\bomega_i$ as $\mathbf{1}$), conduct forward propagation without parameter updating and record the output variance of residual branches (\ie, calculate $\Var[f_i(\xb_{i-1})]$). 
Since all elements in the same parameter/output matrix are independent to each other and are subject to the same distribution, it is sufficient to use a small number of instances in this phrase.
In our experiments, the first batch (no more than 8192 tokens) is used. 

\smallsection{Initialization}
Set $\bomega_i = \sqrt{\sum_{j < i} \Var[f_j(\xb_{j-1})]}$ and initialize all other parameters with the same method used in the \textit{Profiling} phrase. 

In the early stage, \our sets $\beta^2_{i, i}$ to approximately $\frac{1}{i}$ and ensures an $O(\log N)$ output change, thus stabilizing training. 
Model training would become more stable in the late stage (the constant $C$ in Theorem~\ref{theo:layer-dependency-and-shift} is related to parameter gradients), and each layer has the flexibility to adjust $\bomega$ and depends more on its residual branch to calculate the layer outputs. 
After training finishes, \our can be reparameterized as the conventional Post-LN structure (\ie, removing $\bomega$).  
More implementation details are elaborated in Appendix~\ref{appendix:implement}.

\begin{figure}[t]
\centering
\includegraphics[width=\linewidth]{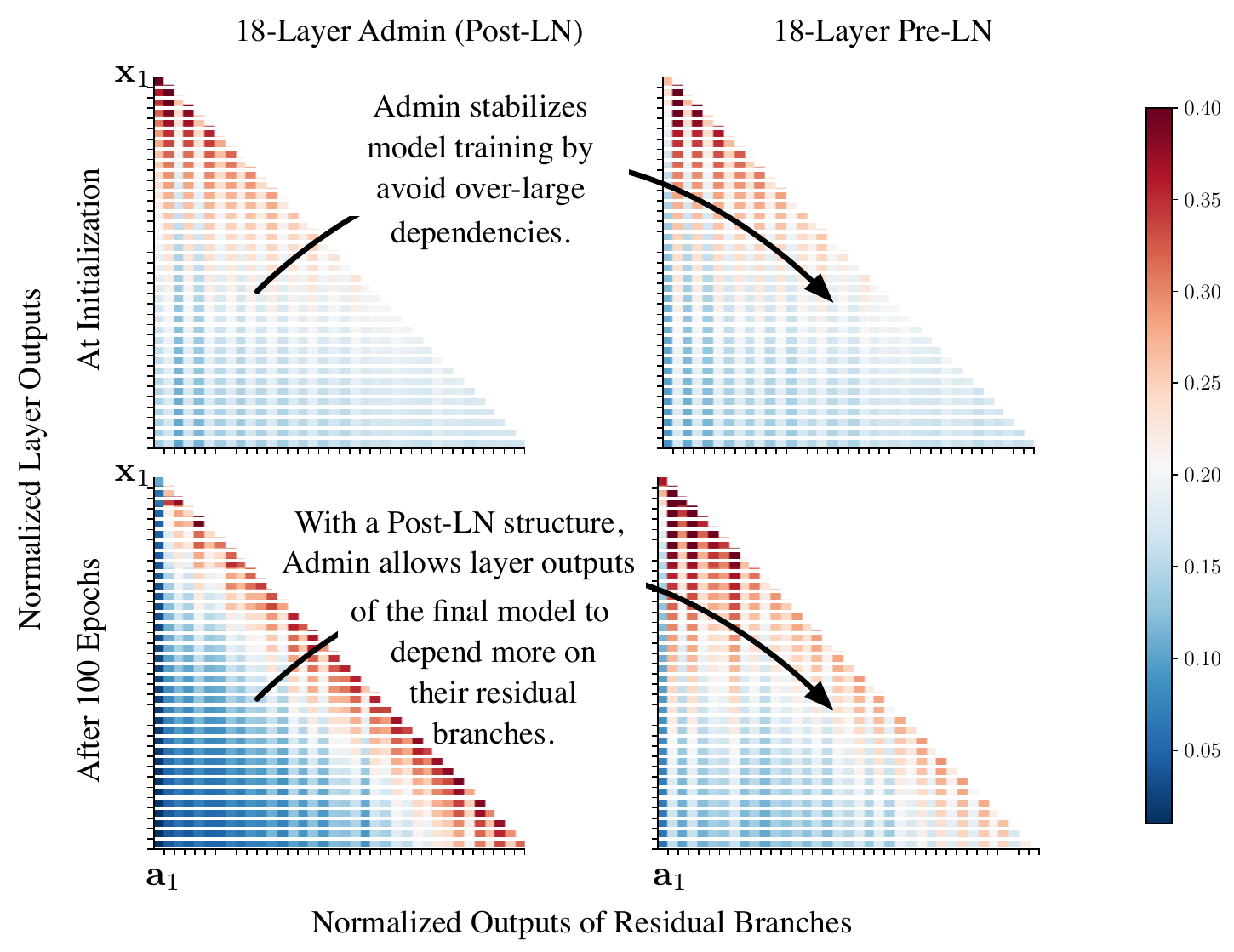}
\caption{
$\beta_{i, j}$ of 18-Layer Admin (Post-LN) and Pre-LN on the WMT-14 En-De dataset.
}
\label{fig:large-layer-dependency}
\end{figure}

\begin{table*}[t]
    \centering
    \caption{BLEU on IWSLT'14 De-En and WMT'14 En-Fr/De (AL-BL refers A-layer encoder \& B-layer decoder).}
    \label{tab:wmt14de}
    \begin{tabularx}{0.95\linewidth}{l|c|YY|YYY}
        \toprule
        Dataset &
        IWSLT'14 De-En &
        \multicolumn{2}{c|}{WMT'14 En-Fr} &
        \multicolumn{3}{c}{WMT'14 En-De}\\
        \midrule
        Enc \#--Dec \#
        & 6L--6L (small) 
        & 6L--6L & 60L--12L
        & 6L--6L & 12L--12L & 18L--18L \\

        \midrule
        Post-LN & 35.64$\pm$0.23
        & 41.29 & failed 
        & 27.80 & failed & failed \\
        Pre-LN & 35.50$\pm$0.04
        & 40.74 & 43.10
        & 27.27 & 28.26 & 28.38 \\ 
        Admin & \textbf{35.67$\pm$0.15} 
        & \textbf{41.47} & \textbf{43.80}
        & \textbf{27.90} & \textbf{28.58} & \textbf{29.03} \\
        \bottomrule
    \end{tabularx}
\end{table*}

To verify our intuition, we calculate the layer dependency of 18-Layer models and visualize the result in Figure~\ref{fig:large-layer-dependency}. 
Figures~\ref{fig:layer-dependency} and \ref{fig:large-layer-dependency}
show that \our avoids over-large dependencies at initialization and unleashes the potential to make the layer outputs depend more on their residual outputs in the final model. 
Moreover, we visualize the output change of \our in Figure~\ref{fig:stability}. 
Benefiting from the adaptive initialization, the output change of \our gets roughly the same increase speed as Pre-LN, even constructed in the Post-LN manner. 
Also, although \our is formulated in a Post-LN manner and suffers from gradient vanishing, 18-layer \our successfully converges and outperforms 18-layer Pre-LN (as in Table~\ref{tab:wmt14de}). 
This evidence supports our intuition that the large dependency on residual branches amplifies the output fluctuation and destabilizes training. 


\section{Experiments}

We conduct experiments on 
IWSLT'14 De-En, WMT'14 En-De, and WMT'14 En-Fr. 
More details are elaborated in Appendix~\ref{appendix:exp}. 

\subsection{Performance Comparison}

We use BLEU as the evaluation matric and summarize the model performance in Table~\ref{tab:wmt14de}. 
On the WMT'14 dataset, we use Transformer-base models with 6, 12, or 18 layers. 
\our achieves a better performance than Post-LN and Pre-LN in all three settings. 
Specifically, 12-Layer and 18-Layer Post-LN diverges without the adaptive initialization.
Pre-LN converges in all settings, but it results in sub-optimal performance. 
Admin not only stabilizes the training of deeper models but benefits more from the increased model capacity then Pre-LN, which verifies our intuition that the Pre-LN structure limits the model potential. 
As in Figure~\ref{fig:wmt14ende_devppl_6_18} and Figure~\ref{fig:wmt14-devppl-12l-iwslt}, although the 6-layer Pre-LN converges faster than Post-LN, its final performance is worse than Post-LN. 
In contrast, \our not only achieves the same convergence speed with Pre-LN in the early stage but reaches a good performance in the late stage. 

We use 6-layer Transformer-small (its hidden dimension is smaller than the base model) on the IWSLT'14 dataset, and all methods perform similarly.
Still, as in Figure~\ref{fig:grid-search}, Admin outperforms the other two by a small margin. 
Together with WMT'14 results, it implies the training stability is related to layer number. 
For shallow networks, the stability difference between Post-LN and Pre-LN is not significant (as in Figure~\ref{fig:stability}), and all methods reach reasonable performance. 
It is worth mentioning that attention and activation dropouts have an enormous impact on IWSLT'14, which is smaller than WMT'14 datasets. 

\begin{figure}[t]
\centering
\includegraphics[width=\linewidth]{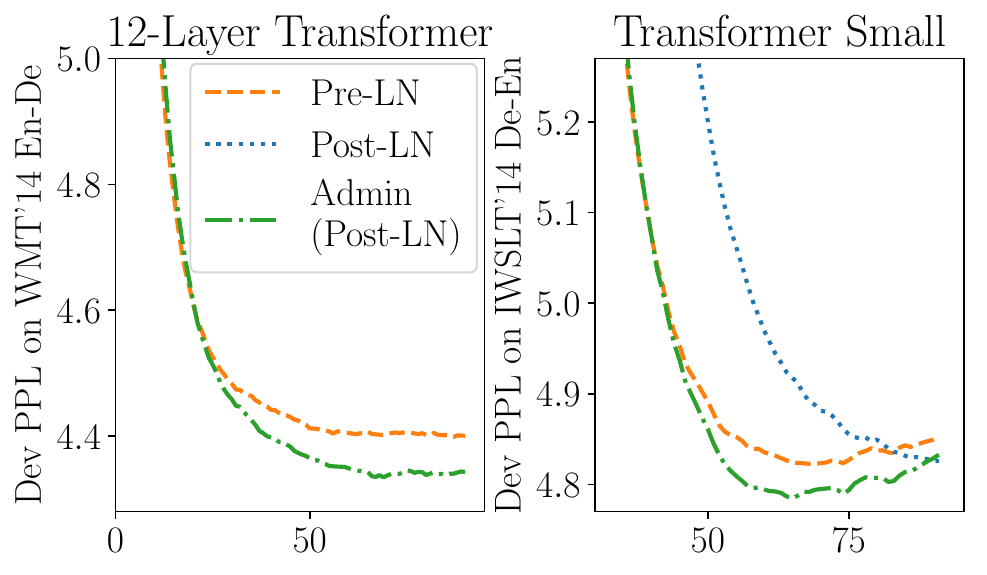}
\caption{
Development PPL on the WMT'14 En-De dataset and the IWLST'14 De-En dataset. }
\label{fig:wmt14-devppl-12l-iwslt}
\end{figure}

To further explore the potential of \our, we train Transformers with a larger size. 
Specifically, we expand the Transformer-base configuration to have a 60-layer encoder and a 12-layer decoder. 
As in Table~\ref{tab:wmt14de}, our method achieves a BLEU score of 43.8 on the WMT'14 En-Fr dataset, the new state-of-the-art without using additional annotations (\eg, back-translation).
More discussions are conducted in Appendix~\ref{appendix:enfr} to compare this model with the current state of the art. 
Furthermore, in-depth analyses are summarized in \citet{Liu2020VeryDT}, including systematic evaluations on the model performance (with TER, METEOR, and BLEU), comprehensive discussions on model dimensions (\ie, depth, head number, and hidden dimension), and fine-grained error analysis.
It is worth mentioning that the 60L-12L Admin model achieves a 30.1 BLEU score on WMT'14 En-De~\cite{Liu2020VeryDT}.

\subsection{Connection to Warmup}
\label{subsec:warmup}

Our previous work~\cite{Liu2019OnTV} establishes that the need for warmup comes from the unstable adaptive learning rates in the early stage.
Still, removing the warmup phrase results in more severe consequences for Transformers than other architectures.
Also, warmup has been found to be useful for the vanilla SGD~\cite{Xiong2019OnLN}.

Theorem~\ref{theo:post-ln-encoder-gradient} establishes that
$
\Var[\cF(\xb_0, W) - \cF(\xb_0, W^*)] \approx \sum_{i=1}^N \beta^2_{i, i} C
$
where $C=\Var[\cG_i(\hxb^*_{i-1}, W_i) - \cG_i(\hxb^*_{i-1}, W_i^*)]$. 
In the early stage of training, the network has larger parameter gradients and thus larger $C$. 
Therefore, using a small learning rate at initialization helps to alleviate the massive output shift of Post-LN. 
We further conduct experiments to explore whether more prolonged warmups can make up the stability difference between Post-LN and Pre-LN.
We observe that 18-layer Post-LN training still fails after extending the warmup phrase from 8 thousand updates to 16, 24, and 32 thousand.
It shows that learning rate warmup alone cannot neutralize the instability of Post-LN. 
Intuitively, massive output shifts not only require a small learning rate but also unsmoothes the loss surface~\cite{Li2018VisualizingTL} and make the training ill-conditioned.

\our regularizes the model behavior at initialization and stabilizes the training. 
To explore whether \our is able to stabilize the training alone, we remove the warmup phase and conduct a grid search on optimizer hyper-parameters. 
The results are visualized in Figure~\ref{fig:grid-search}. 
It shows that as Post-LN is more sensitive to the choice of hyper-parameters, \our successfully stabilizes the training without hurting its potential.  

\begin{figure}[t]
\centering
\includegraphics[width=\linewidth]{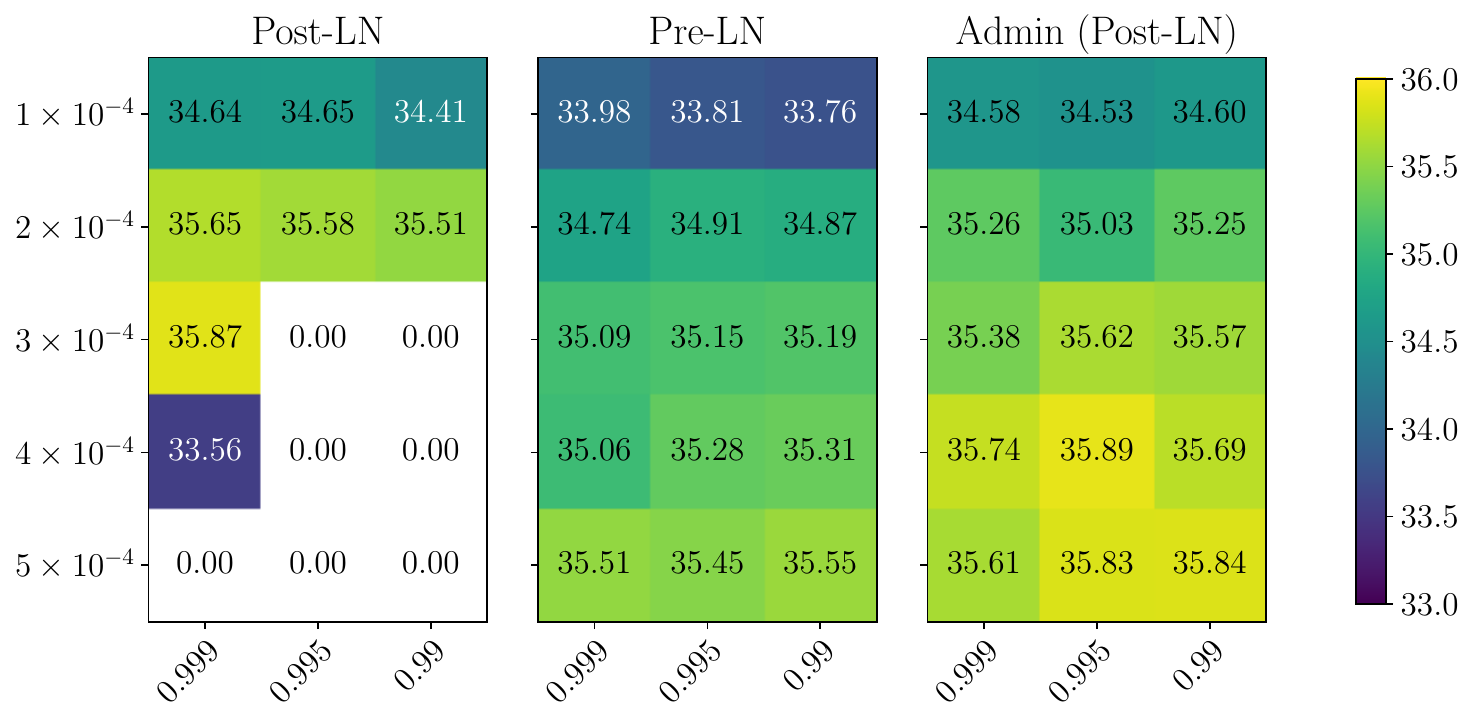}
\caption{
BLEU score of Post-LN, Pre-LN and Admin on the IWSLT'14 De-En dataset (x-axis is the $\beta_2$ for adaptive optimizers and y-axis is the learning rate). Pre-LN converges in all settings while Post-LN diverges in 7 out of 15 settings. 
When Post-LN converges, it outperforms Pre-LN in 7 out of 8 settings. 
Admin stabilizes Post-LN training and outperforms Pre-LN (its best performance is comparable with Post-LN). }
\label{fig:grid-search}
\end{figure}

\subsection{Comparing to Other Initializations}

We compare our methods with three initialization methods, \ie, ReZero~\cite{Bachlechner2020ReZeroIA}, FixUp~\cite{zhang2018residual}, and LookLinear~\cite{balduzzi17b}.
Specifically, we first conduct experiments with 18-layer Transformers on the WMT'14 De-En dataset. 
In our experiments, we observe that all of ReZero (which does not contain layer normalization), FixUp (which also does not contain layer normalization), and LookLinear (which is incorporated with Post-LN) leads to divergent training.
With further analysis, we find that the half-precision training and dropout could destabilize FixUp and ReZero, due to the lack of layer normalization.
Simultaneously, we find that even for shadow networks, having an over small reliance on residual branches hurts the model performance, which also supports our intuition. 
For example, as elaborated in Appendix~\ref{appendix:rezero}, applying ReZero to Transformer-small leads to a 1-2 BLEU score drop on the IWSLT'14 De-En dataset. 


\section{Related Work}
\label{sec:related}

\smallsection{Transformer}
Transformer~\cite{Vaswani2017AttentionIA} has led to a series of breakthroughs in various domains~\cite{Devlin2019BERTPO,Velickovic2017GraphAN,Huang2019MusicTG,Parmar2018ImageT,Ramachandran2019StandAloneSI}. 
\citet{Liu2019OnTV} show that compared to other architectures, removing the warmup phase is more damaging for Transformers, especially Post-LN. 
Similarly, it has been found that the original Transformer (referred to as Post-LN) is less robust than its Pre-LN variant~\cite{Baevski2018AdaptiveIR,Nguyen2019TransformersWT,wang-etal-2019-learning}. 
Our studies go beyond the existing literature on gradient vanishing~\cite{Xiong2019OnLN} and identify an essential factor influencing Transformer training greatly. 

\smallsection{Deep Network Initialization} 
It has been observed that deeper networks can lead to better performance. 
For example, \citet{Dong2020TowardsAR} find that the network depth players a similar role with the sample number in numerical ODE solvers, which hinders the system from getting more precise results.
Many attempts have been made to clear obstacles for training deep networks, including various initialization methods.
Based on the independence among initialized parameters, one method is derived and found to be useful to handle the gradient vanishing~\cite{Glorot2010UnderstandingTD}. 
Similar methods are further developed for ReLU networks~\cite{He2015DelvingDI}. 
\citet{He2016DeepRL} find that deep network training is still hard even after addressing the gradient vanishing issue and propose residual networks.
\citet{David2017Shattered} identifies the shattered gradient issue and proposes LookLinear initialization. 

On the other hand, 
although it is observed that scaling residual outputs to smaller values helps to stabilize training~\cite{Hanin2018HowTS,Mishkin2015AllYN,zhang2018residual,Bachlechner2020ReZeroIA,Goyal2017AccurateLM}, there is no systematic analysis on what complicates Transformer training or its underlying connection to the dependency on residual branches.
Here, we identify that unbalanced gradients are not the direct cause of the Post-LN instability, recognize the amplification effect, and propose a novel adaptive initialization method.


\section{Conclusion}

In this paper, we study the difficulties of training Transformers in theoretical and empirical manners. 
Our study in Section~\ref{sec:gradient-distribution} suggests that the gradient vanishing problem is not the root cause of unstable Transformer training. 
Also, the unbalanced gradient distribution issue is mostly addressed by adaptive optimizers. 
In Section~\ref{sec:layer-dependency}, we reveal the root cause of the instability to be the strong dependency on residual branches, which amplifies the fluctuation caused by parameter changes and destabilizes model training. 
In light of our analysis, we propose \our, an adaptive initialization method to stabilize Transformers training. 
It controls the dependency at the beginning of training and maintains the flexibility to capture those dependencies once training stabilizes. 
Extensive experiments verify our intuitions and show that, without introducing additional hyper-parameters, \our achieves more stable training, faster convergence, and better performance. 

Our work opens up new possibilities to not only further push the state-of-the-art but understand deep network training better. 
It leads to many interesting future works, including generalizing Theorem~\ref{theo:layer-dependency-and-shift} to other models, designing new algorithms to automatically adapt deep networks to different training configurations, upgrading the Transformer architecture, and applying our proposed \our to conduct training in a larger scale.


\section*{Acknowledge}
We thank all reviewers for their constructive comments; Chengyu Dong, Haoming Jiang, Jingbo Shang, Xiaotao Gu, and Zihan Wang for valuable discussions and comments; Jingbo Shang for sharing GPU machines; and Microsoft for setting up GPU machines.
The research was sponsored in part by DARPA No. W911NF-17-C-0099 and No.  FA8750-19-2-1004, National Science Foundation IIS-19-56151, IIS-17-41317, IIS 17-04532, and IIS 16-18481, and DTRA HDTRA11810026.

\bibliography{citation.bib}

\begin{thebibliography}{39}
\expandafter\ifx\csname natexlab\endcsname\relax\def\natexlab#1{#1}\fi

\bibitem[{Ba et~al.(2016)Ba, Kiros, and Hinton}]{Ba2016LayerN}
Jimmy Ba, Jamie~Ryan Kiros, and Geoffrey~E. Hinton. 2016.
\newblock Layer normalization.
\newblock \emph{ArXiv}, abs/1607.06450.

\bibitem[{Bachlechner et~al.(2020)Bachlechner, Majumder, Mao, Cottrell, and
  McAuley}]{Bachlechner2020ReZeroIA}
Thomas~C. Bachlechner, Bodhisattwa~Prasad Majumder, Huanru~Henry Mao,
  Garrison~W. Cottrell, and Julian~J. McAuley. 2020.
\newblock Rezero is all you need: Fast convergence at large depth.
\newblock \emph{ArXiv}, abs/2003.04887.

\bibitem[{Baevski and Auli(2019)}]{Baevski2018AdaptiveIR}
Alexei Baevski and Michael Auli. 2019.
\newblock Adaptive input representations for neural language modeling.
\newblock In \emph{ICLR}.

\bibitem[{Balduzzi et~al.(2017{\natexlab{a}})Balduzzi, Frean, Leary, Lewis, Ma,
  and McWilliams}]{balduzzi17b}
David Balduzzi, Marcus Frean, Lennox Leary, J.~P. Lewis, Kurt Wan-Duo Ma, and
  Brian McWilliams. 2017{\natexlab{a}}.
\newblock The shattered gradients problem: If resnets are the answer, then what
  is the question?
\newblock In \emph{ICML}.

\bibitem[{Balduzzi et~al.(2017{\natexlab{b}})Balduzzi, Frean, Leary, Lewis, Ma,
  and McWilliams}]{David2017Shattered}
David Balduzzi, Marcus Frean, Lennox Leary, J~P Lewis, Kurt Wan-Duo Ma, and
  Brian McWilliams. 2017{\natexlab{b}}.
\newblock The shattered gradients problem: If resnets are the answer, then what
  is the question?
\newblock In \emph{ICML}.

\bibitem[{Bengio et~al.(1994)Bengio, Simard, and
  Frasconi}]{Bengio1994LearningLD}
Yoshua Bengio, Patrice~Y. Simard, and Paolo Frasconi. 1994.
\newblock Learning long-term dependencies with gradient descent is difficult.
\newblock \emph{IEEE transactions on neural networks}.

\bibitem[{Bojar et~al.(2014)Bojar, Buck, Federmann, Haddow, Koehn, Leveling,
  Monz, Pecina, Post, Saint-Amand et~al.}]{bojar2014findings}
Ond{\v{r}}ej Bojar, Christian Buck, Christian Federmann, Barry Haddow, Philipp
  Koehn, Johannes Leveling, Christof Monz, Pavel Pecina, Matt Post, Herve
  Saint-Amand, et~al. 2014.
\newblock Findings of the 2014 workshop on statistical machine translation.
\newblock In \emph{Workshop on Statistical Machine Translation}.

\bibitem[{Cettolo et~al.(2014)Cettolo, Niehues, St{\"u}ker, Bentivogli, and
  Federico}]{cettolo2014report}
Mauro Cettolo, Jan Niehues, Sebastian St{\"u}ker, Luisa Bentivogli, and
  Marcello Federico. 2014.
\newblock Report on the 11th iwslt evaluation campaign, iwslt 2014.
\newblock In \emph{International Workshop on Spoken Language Translation,
  Hanoi, Vietnam}.

\bibitem[{Chen et~al.(2018)Chen, Firat, Bapna, Johnson, Macherey, Foster,
  Jones, Parmar, Schuster, Chen, Wu, and Hughes}]{Chen2018TheBO}
Mia~Xu Chen, Orhan Firat, Ankur Bapna, Melvin Johnson, Wolfgang Macherey,
  George Foster, Llion Jones, Niki Parmar, Michael Schuster, Zhi-Feng Chen,
  Yonghui Wu, and Macduff Hughes. 2018.
\newblock The best of both worlds: Combining recent advances in neural machine
  translation.
\newblock In \emph{ACL}.

\bibitem[{Devlin et~al.(2019)Devlin, Chang, Lee, and
  Toutanova}]{Devlin2019BERTPO}
Jacob Devlin, Ming-Wei Chang, Kenton Lee, and Kristina Toutanova. 2019.
\newblock Bert: Pre-training of deep bidirectional transformers for language
  understanding.
\newblock In \emph{NAACL-HLT}.

\bibitem[{Dong et~al.(2020)Dong, Liu, Li, and Shang}]{Dong2020TowardsAR}
Chengyu Dong, Liyuan Liu, Zichao Li, and Jingbo Shang. 2020.
\newblock Towards adaptive residual network training: A neural-ode perspective.
\newblock In \emph{ICML}.

\bibitem[{Glorot and Bengio(2010)}]{Glorot2010UnderstandingTD}
Xavier Glorot and Yoshua Bengio. 2010.
\newblock Understanding the difficulty of training deep feedforward neural
  networks.
\newblock In \emph{AISTATS}.

\bibitem[{Goyal et~al.(2017)Goyal, Doll{\'a}r, Girshick, Noordhuis, Wesolowski,
  Kyrola, Tulloch, Jia, and He}]{Goyal2017AccurateLM}
Priya Goyal, Piotr Doll{\'a}r, Ross~B. Girshick, Pieter Noordhuis, Lukasz
  Wesolowski, Aapo Kyrola, Andrew Tulloch, Yangqing Jia, and Kaiming He. 2017.
\newblock Accurate, large minibatch sgd: Training imagenet in 1 hour.
\newblock \emph{ArXiv}, abs/1706.02677.

\bibitem[{Hanin and Rolnick(2018)}]{Hanin2018HowTS}
Boris Hanin and David Rolnick. 2018.
\newblock How to start training: The effect of initialization and architecture.
\newblock In \emph{NeurIPS}.

\bibitem[{He et~al.(2015)He, Zhang, Ren, and Sun}]{He2015DelvingDI}
Kaiming He, Xiangyu Zhang, Shaoqing Ren, and Jian Sun. 2015.
\newblock Delving deep into rectifiers: Surpassing human-level performance on
  imagenet classification.
\newblock In \emph{ICCV}.

\bibitem[{He et~al.(2016)He, Zhang, Ren, and Sun}]{He2016DeepRL}
Kaiming He, Xiangyu Zhang, Shaoqing Ren, and Jian Sun. 2016.
\newblock Deep residual learning for image recognition.
\newblock In \emph{CVPR}.

\bibitem[{Huang et~al.(2019)Huang, Vaswani, Uszkoreit, Simon, Hawthorne,
  Shazeer, Dai, Hoffman, Dinculescu, and Eck}]{Huang2019MusicTG}
Cheng-Zhi~Anna Huang, Ashish Vaswani, Jakob Uszkoreit, Ian Simon, Curtis
  Hawthorne, Noam Shazeer, Andrew~M. Dai, Matthew~D. Hoffman, Monica
  Dinculescu, and Douglas Eck. 2019.
\newblock Music transformer: Generating music with long-term structure.
\newblock In \emph{ICLR}.

\bibitem[{Li et~al.(2018)Li, Xu, Taylor, and Goldstein}]{Li2018VisualizingTL}
Hao Li, Zheng Xu, Gavin Taylor, and Tom Goldstein. 2018.
\newblock Visualizing the loss landscape of neural nets.
\newblock In \emph{NeurIPS}.

\bibitem[{Liu et~al.(2020{\natexlab{a}})Liu, Jiang, He, Chen, Liu, Gao, and
  Han}]{Liu2019OnTV}
Liyuan Liu, Haoming Jiang, Pengcheng He, Weizhu Chen, Xiaodong Liu, Jianfeng
  Gao, and Jiawei Han. 2020{\natexlab{a}}.
\newblock On the variance of the adaptive learning rate and beyond.
\newblock In \emph{ICLR}.

\bibitem[{Liu et~al.(2020{\natexlab{b}})Liu, Duh, Liu, and Gao}]{Liu2020VeryDT}
Xiaodong Liu, Kevin Duh, Liyuan Liu, and Jianfeng Gao. 2020{\natexlab{b}}.
\newblock Very deep transformers for neural machine translation.
\newblock \emph{ArXiv}, abs/2008.07772.

\bibitem[{Lu et~al.(2020)Lu, Li, He, Sun, Dong, Qin, Wang, and
  Liu}]{lu2020understanding}
Yiping Lu, Zhuohan Li, Di~He, Zhiqing Sun, Bin Dong, Tao Qin, Liwei Wang, and
  Tie-Yan Liu. 2020.
\newblock Understanding and improving transformer from a multi-particle dynamic
  system point of view.
\newblock In \emph{ICLR Workshop DeepDiffEq}.

\bibitem[{Mishkin and Matas(2015)}]{Mishkin2015AllYN}
Dmytro Mishkin and Juan E.~Sala Matas. 2015.
\newblock All you need is a good init.
\newblock In \emph{ICLR}.

\bibitem[{Nguyen and Salazar(2019)}]{Nguyen2019TransformersWT}
Toan~Q. Nguyen and Julian Salazar. 2019.
\newblock Transformers without tears: Improving the normalization of
  self-attention.
\newblock In \emph{IWSLT}.

\bibitem[{Ott et~al.(2019)Ott, Edunov, Baevski, Fan, Gross, Ng, Grangier, and
  Auli}]{ott2019fairseq}
Myle Ott, Sergey Edunov, Alexei Baevski, Angela Fan, Sam Gross, Nathan Ng,
  David Grangier, and Michael Auli. 2019.
\newblock fairseq: A fast, extensible toolkit for sequence modeling.
\newblock In \emph{NAACL-HLT Demonstrations}.

\bibitem[{Parmar et~al.(2018)Parmar, Vaswani, Uszkoreit, Kaiser, Shazeer, Ku,
  and Tran}]{Parmar2018ImageT}
Niki Parmar, Ashish Vaswani, Jakob Uszkoreit, Lukasz Kaiser, Noam Shazeer,
  Alexander Ku, and Dustin Tran. 2018.
\newblock Image transformer.
\newblock In \emph{ICML}.

\bibitem[{Popel and Bojar(2018)}]{Popel2018TrainingTF}
Martin Popel and Ondrej Bojar. 2018.
\newblock Training tips for the transformer model.
\newblock \emph{The Prague Bulletin of Mathematical Linguistics}, 110:43 -- 70.

\bibitem[{Raffel et~al.(2019)Raffel, Shazeer, Roberts, Lee, Narang, Matena,
  Zhou, Li, and Liu}]{raffel2019exploring}
Colin Raffel, Noam Shazeer, Adam Roberts, Katherine Lee, Sharan Narang, Michael
  Matena, Yanqi Zhou, Wei Li, and Peter~J Liu. 2019.
\newblock Exploring the limits of transfer learning with a unified text-to-text
  transformer.
\newblock \emph{ArXiv}, abs/1910.10683.

\bibitem[{Ramachandran et~al.(2019)Ramachandran, Parmar, Vaswani, Bello,
  Levskaya, and Shlens}]{Ramachandran2019StandAloneSI}
Prajit Ramachandran, Niki Parmar, Ashish Vaswani, Irwan Bello, Anselm Levskaya,
  and Jonathon Shlens. 2019.
\newblock Stand-alone self-attention in vision models.
\newblock In \emph{NeurIPS}.

\bibitem[{Saxe et~al.(2013)Saxe, McClelland, and Ganguli}]{saxe2013exact}
Andrew~M Saxe, James~L McClelland, and Surya Ganguli. 2013.
\newblock Exact solutions to the nonlinear dynamics of learning in deep linear
  neural networks.
\newblock \emph{ArXiv}, abs/1312.6120.

\bibitem[{Szegedy et~al.(2016)Szegedy, Vanhoucke, Ioffe, Shlens, and
  Wojna}]{szegedy2016rethinking}
Christian Szegedy, Vincent Vanhoucke, Sergey Ioffe, Jon Shlens, and Zbigniew
  Wojna. 2016.
\newblock Rethinking the inception architecture for computer vision.
\newblock In \emph{CVPR}.

\bibitem[{Vaswani et~al.(2017)Vaswani, Shazeer, Parmar, Uszkoreit, Jones,
  Gomez, Kaiser, and Polosukhin}]{Vaswani2017AttentionIA}
Ashish Vaswani, Noam Shazeer, Niki Parmar, Jakob Uszkoreit, Llion Jones,
  Aidan~N. Gomez, Lukasz Kaiser, and Illia Polosukhin. 2017.
\newblock Attention is all you need.
\newblock In \emph{NIPS}.

\bibitem[{Velickovic et~al.(2018)Velickovic, Cucurull, Casanova, Romero,
  Li{\`o}, and Bengio}]{Velickovic2017GraphAN}
Petar Velickovic, Guillem Cucurull, Arantxa Casanova, Adriana Romero, Pietro
  Li{\`o}, and Yoshua Bengio. 2018.
\newblock Graph attention networks.
\newblock In \emph{ICLR}.

\bibitem[{Wang et~al.(2019)Wang, Li, Xiao, Zhu, Li, Wong, and
  Chao}]{wang-etal-2019-learning}
Qiang Wang, Bei Li, Tong Xiao, Jingbo Zhu, Changliang Li, Derek~F. Wong, and
  Lidia~S. Chao. 2019.
\newblock Learning deep transformer models for machine translation.
\newblock In \emph{ACL}.

\bibitem[{Wu et~al.(2019{\natexlab{a}})Wu, Fan, Baevski, Dauphin, and
  Auli}]{wu2018pay}
Felix Wu, Angela Fan, Alexei Baevski, Yann Dauphin, and Michael Auli.
  2019{\natexlab{a}}.
\newblock Pay less attention with lightweight and dynamic convolutions.
\newblock In \emph{ICLR}.

\bibitem[{Wu et~al.(2019{\natexlab{b}})Wu, Wang, Xia, Tian, Gao, Qin, Lai, and
  Liu}]{Wu2019DepthGF}
Lijun Wu, Yiren Wang, Yingce Xia, Fei Tian, Fei Gao, Tao Qin, Jianhuang Lai,
  and Tie-Yan Liu. 2019{\natexlab{b}}.
\newblock Depth growing for neural machine translation.
\newblock In \emph{ACL}.

\bibitem[{Xiong et~al.(2019)Xiong, Yang, He, Zheng, xin Zheng, Xing, Zhang,
  Lan, Wang, and Liu}]{Xiong2019OnLN}
Ruibin Xiong, Yunchang Yang, Di~He, Kai Zheng, Shu xin Zheng, Chen Xing,
  Huishuai Zhang, Yanyan Lan, Li-Wei Wang, and Tie-Yan Liu. 2019.
\newblock On layer normalization in the transformer architecture.
\newblock \emph{ArXiv}, abs/2002.04745.

\bibitem[{Zhang et~al.(2019{\natexlab{a}})Zhang, Dauphin, and
  Ma}]{zhang2018residual}
Hongyi Zhang, Yann~N. Dauphin, and Tengyu Ma. 2019{\natexlab{a}}.
\newblock Fixup initialization: Residual learning without normalization.
\newblock In \emph{ICLR}.

\bibitem[{Zhang et~al.(2019{\natexlab{b}})Zhang, Karimireddy, Veit, Kim, Reddi,
  Kumar, and Sra}]{Zhang2019WhyAB}
Jingzhao Zhang, Sai~Praneeth Karimireddy, Andreas Veit, Seungyeon Kim,
  Sashank~J. Reddi, Surinder Kumar, and Suvrit Sra. 2019{\natexlab{b}}.
\newblock Why adam beats sgd for attention models.
\newblock \emph{ArXiv}, abs/1912.03194.

\bibitem[{Zhao et~al.(2019)Zhao, Sun, Xu, Zhang, and Luo}]{zhao2019muse}
Guangxiang Zhao, Xu~Sun, Jingjing Xu, Zhiyuan Zhang, and Liangchen Luo. 2019.
\newblock Muse: Parallel multi-scale attention for sequence to sequence
  learning.
\newblock \emph{ArXiv}, abs/1911.09483.

\end{thebibliography}
\bibliographystyle{acl_natbib}

\onecolumn
\appendix
\appendixpage

\setcounter{theorem}{0}    

\section{Gradients at Initialization}
\label{appendix:ini_grad}

Here, we first reveal that Pre-LN does not suffer from the gradient vanishing. 
Then we establish that only the Post-LN decoder suffers from the gradient vanishing, but not the Post-LN encoder. 
For simplicity, we use $\Delta \xb$ to denote gradients, \ie, $\Delta \xb = \frac{\partial \mathcal{L}}{\partial \xb}$ where $\mathcal{L}$ is the training objective. 
Following the previous study~\cite{Bengio1994LearningLD,Glorot2010UnderstandingTD,He2015DelvingDI,saxe2013exact}, we analyze the gradient distribution at the very beginning of training, assume that the randomly initialized parameters and the partial derivative with regard to module inputs are independent.

\subsection{Pre-LN Analysis}
\label{subsec:preln-analysis}
For Pre-LN encoders, we have 
$ \xb\ep_{2i} = \xb\ep_{2i-1} + \FFN(\LN(\xb\ep_{2i-1}))$ and
$ \Delta\xb\ep_{2i-1} = \Delta\xb\ep_{2i} (1 + \frac{\partial \FFN(\LN(\xb\ep_{2i-1})) }{\partial \xb\ep_{2i}})$. 
At initialization, the two terms on the right part are approximately independent and $E[\frac{\partial \FFN(\LN(\zb\ep_{2i-1})) }{\partial \xb\ep_{2i}}] = 0$. Therefore we have $ \Var[\Delta\xb\ep_{2i-1}] \geq \Var[\Delta\xb\ep_{2i}]$.
Similarly, we can get $ \Var[\Delta\xb\ep_{2i-2}] \geq \Var[\Delta\xb\ep_{2i-1}]$ thus $\forall i \leq j, \Var[\Delta\xb\ep_{i}] \geq \Var[\Delta\xb\ep_{j}]$. 
Applying the same analysis to Pre-LN decoders, we can get $\forall i \leq j, \Var[\Delta\xb\pd_{i}] \geq \Var[\Delta\xb\pd_{j}]$.
Thus, lower layers have larger gradients than higher layers, and gradients do not vanish in the backpropagation. 
\begin{remark}
For Pre-LN, if $\forall i, \Delta \xb_i^{(p\cdot)}$ and the derivatives of modules in the $i$-th sub-layer are independent, then $\forall i \leq j, \Var[\Delta \xb^{(p\cdot)}_{i}] \geq \Var[\Delta \xb^{(p\cdot)}_{j}]$.
\label{remark: pre-ln-gradient}
\end{remark}

\subsection{Post-LN Encoder Analysis}
\label{subsec:postln-encoder-analysis}
Different from Pre-LN, $\xb_{i}^{(oe)}$ and $\xb_{i-1}^{(oe)}$ are associated with not only the residual connection but the layer normalization, which makes it harder to establish the connection on their gradients.
After making assumptions on the model initialization, we find that lower layers in Post-LN encoder also have larger gradients than higher layers, and gradients do not vanish in the backpropagation through the encoder.   

\begin{theorem}
For Post-LN Encoders, if $\bgamma$ and $\bnu$ in the Layer Norm are initialized as $1$ and $0$ respectively; all other parameters are initialized by symmetric distributions with zero mean; $\xb_{i}^{(oe)}$ and $\Delta \xb_i^{(oe)}$ are subject to symmetric distributions with zero mean; the variance of $\xb_{i}^{(oe)}$ is $1$ (\ie, normalized by Layer Norm); $\Delta \xb_i^{(oe)}$ and the derivatives of modules in $i$-th sub-layer are independent, we have $\Var[\Delta \xb_{i-1}] \geq \Var[\Delta \xb_{i}]$.
\end{theorem}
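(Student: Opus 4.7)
The plan is to propagate gradients backward through a single Post-LN encoder sub-layer and show that the scaling factor contributed by Layer Norm is exactly compensated by the residual-plus-branch backward gain, so that $\Var[\Delta\xb_{i-1}^{(oe)}] \geq \Var[\Delta\xb_i^{(oe)}]$. It suffices to prove this one-step bound, since $i$ ranges over both Self-Attention and FFN sub-layers and iterating the bound yields monotonicity across the whole encoder.

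First I would fix notation and write the forward recursion for a single Post-LN sub-layer, $\xb_i^{(oe)} = \LN(\zb_i)$ with $\zb_i = \xb_{i-1}^{(oe)} + f_i(\xb_{i-1}^{(oe)})$, where $f_i$ is either $\SATT$ or $\FFN$. Under the hypotheses ($\bgamma=\mathbf{1}$, $\bnu=\mathbf{0}$, $\Var[\xb_{i-1}^{(oe)}]=1$, and symmetric-zero-mean initialization of all other parameters), independence between $\xb_{i-1}^{(oe)}$ and $f_i(\xb_{i-1}^{(oe)})$ at initialization gives $\Var[\zb_i] = 1 + \Var[f_i(\xb_{i-1}^{(oe)})]$; call this quantity $s_i^2$. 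Then I would invoke the backward rule
\[
\Delta \xb_{i-1}^{(oe)} \;=\; J_{\LN}(\zb_i)^{\!\top}\Delta \xb_i^{(oe)} \;+\; J_{f_i}(\xb_{i-1}^{(oe)})^{\!\top}\bigl(J_{\LN}(\zb_i)^{\!\top}\Delta \xb_i^{(oe)}\bigr),
\]
which is the composition of the residual and branch paths through the Layer Norm Jacobian.

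Next I would compute $\Var[\Delta \xb_{i-1}^{(oe)}]$ using the independence assumption between $\Delta \xb_i^{(oe)}$ and the module derivatives. The symmetric-zero-mean assumption on parameters and on $\xb_{i}^{(oe)}$, $\Delta \xb_i^{(oe)}$ kills the cross term between the shortcut and branch contributions in the variance. For the Layer Norm Jacobian $J_{\LN}(\zb_i) = \frac{1}{\sigma_i}\bigl(I - \tfrac{1}{D}\mathbf{1}\mathbf{1}^{\!\top} - \tfrac{1}{D}\LN(\zb_i)\LN(\zb_i)^{\!\top}\bigr)$, the two rank-one projection corrections contribute $O(1/D)$ terms, leaving the dominant effect as coordinate-wise multiplication by $1/\sigma_i = 1/s_i$. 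For the branch path I would mirror the forward-pass variance calculation: because $f_i$ is composed of linear maps (and for FFN, a ReLU which halves variance in a way that matches in forward and backward), the independence of weights implies that backward propagation through $f_i$ multiplies the variance by the same factor $\Var[f_i(\xb_{i-1}^{(oe)})] = s_i^2 - 1$ that appears in the forward pass.

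Combining these pieces yields
\[
\Var[\Delta \xb_{i-1}^{(oe)}] \;\approx\; \frac{1}{s_i^2}\,\Var[\Delta \xb_i^{(oe)}] \;+\; \frac{s_i^2-1}{s_i^2}\,\Var[\Delta \xb_i^{(oe)}] \;=\; \Var[\Delta \xb_i^{(oe)}],
\]
which gives the desired inequality (with equality in the idealized limit and $\geq$ once the dropped $O(1/D)$ Layer Norm projection terms and the Jensen-type slack between $\E[1/\sigma_i^2]$ and $1/\E[\sigma_i^2]$ are accounted for). The main obstacle I anticipate is the Layer Norm Jacobian: justifying that the two rank-one projection terms can be dropped requires either a high-dimensional approximation or a symmetry argument showing the projections vanish in expectation against a symmetrically-distributed $\Delta\xb_i^{(oe)}$. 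A secondary subtlety is verifying that the FFN's ReLU, under symmetric initialization of $W^{(1)}$, contributes identical forward- and backward-variance factors, since the standard Glorot/He analysis already covers this case and can be cited to close the gap.
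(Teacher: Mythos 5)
Your proposal takes essentially the same route as the paper's proof: for each sub-layer you show that the forward normalization factor $\sigma_b^2 = 1+\Var[f_i(\xb_{i-1})]$ introduced by the layer norm is exactly cancelled by the backward gain $1+\Var[\text{branch backprop}]$ of the shortcut-plus-branch paths, with the dropped layer-norm Jacobian corrections and cross terms only strengthening the inequality, and with He-style matching of forward and backward variance factors for the ReLU branch. The one place you gloss over is the Self-Attention sub-layer, where $f_i$ is not just ``linear maps plus ReLU'': the paper closes this case by observing that the squared softmax weights $P_h=\E[\softmax^2(\cdot)]$ enter the forward and backward (value-path) variance identically, while the gradient path through the softmax logits is discarded, which again only yields the desired $\geq$.
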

\begin{proof}
We first prove $\Var[\Delta \xb\eo_{2i-1}] \geq \Var[\Delta \xb\eo_{2i}]$, \ie, the backpropagation through FFN sublayers does not suffer from gradient vanishing. 
In Post-LN encoders, the output of FFN sublayers is calculated as $\xb\eo_{2i} = \LN(\bbb\eo_{2i})$ where $\bbb\eo_{2i} =
\xb\eo_{2i-1} + \max(0, \xb\eo_{2i-1}\wf)\wff$.
Since at initialization, $\wf$ and $\wff$ are independently randomized by symmetric distributions, we have $\E[\bbb\eo_{2i}] = 0$ and 
$$
\xb\eo_{2i} = \frac{\xb\eo_{2i-1} + \max(\xb\eo_{2i-1}\wf, 0)\wff}{\sigma_{b,2i}}
$$ 
where $\sigma_{b,2i}^2 = \Var[\bbb\eo_{2i}]$. 
Referring to the dimension of $\wf$ as $D \times D_f$, \citet{He2015DelvingDI} establishes that
$$
\Var[\max(\xb\eo_{2i-1}\wf, 0)\wff] = \frac{1}{2} D D_f \Var[w^{(1)}] \Var[w^{(2)}] \Var[\xb\eo_{2i-1}].
$$
Since in Post-LN, $\xb\eo_{2i-1}$ is the output of layer norm, we have $\Var[\xb\eo_{2i-1}] = 1$. 
Thus, 
\begin{align}
\sigma_{b,2i}^2 & = \Var[\bbb\eo_{2i}] = \Var[\xb\eo_{2i-1}] + \Var[\max(\xb\eo_{2i-1}\wf, 0)\wff] \nonumber\\
& = 1 + \frac{1}{2} D D_f \Var[w^{(1)}] \Var[w^{(2)}].
\label{eqn:ffn-sigma}
\end{align}
Assuming different terms are also independent in the backpropagation, we have
$$
\Var[\Delta \xb\eo_{2i-1}] \geq \Var[\frac{1}{\sigma_{b, 2i}} (\Delta \xb\eo_{2i} + \Delta \xb\eo_{2i} \frac{\partial \max(\xb\eo_{2i-1}\wf, 0)\wff}{\partial \xb\eo_{2i-1}})].
$$ 
At initialization, \citet{He2015DelvingDI} establishes that
$$
\Var[\Delta \xb\eo_{2i} \frac{\partial \max(\xb\eo_{2i-1}\wf, 0)\wff}{\partial \xb\eo_{2i-1}}] = \frac{1}{2} D D_f \Var[w^{(1)}] \Var[w^{(2)}] \Var[\Delta \xb\eo_{2i}].
$$
Therefore, we have
\begin{align}
\Var[\Delta \xb\eo_{2i-1}]& \geq  \frac{1}{\sigma^2_{b, 2i}} (1 + \frac{1}{2} D D_f \Var[w^{(1)}] \Var[w^{(2)}]) \Var[\Delta \xb\eo_{2i}].
\label{eqn:ffn-back-propagation}
\end{align}
Combining Equation~\ref{eqn:ffn-sigma} with Equation~\ref{eqn:ffn-back-propagation}, we have
\begin{align}
\Var[\Delta \xb\eo_{2i-1}] \geq \Var[\Delta \xb\eo_{2i}]
\label{eqn:ffn-no-vanish}
\end{align}
which shows the backpropagation through FFN sublayers does not suffer from gradient vanishing. 

Now we proceed to prove that, $\Var[\Delta \xb\eo_{2i-2}] \geq \Var[\Delta \xb\eo_{2i-1}]$, \ie, the backpropagation through Self-Attention sublayers do not suffer from gradient vanishing. 
In Post-LN encoders, the output of Self-Attention sublayers are calculated as 
$\xb\eo_{2i-1} = \LN(\bbb\eo_{2i-1})$ where $\bbb\eo_{2i-1} = \xb\eo_{2i-2} + \ab\eo_{2i-1}$ and $\ab\od_{2i-1} = \sum_{h} \softmax(\xb\eo_{2i-2} \wq_h \wk_h \xbteo_{2i-2})\xb\eo_{2i-2} \wv_h \wo_h$. 
At initialization, since $\wq$, $\wk$, $\wv$, and $\wo$ are independently randomized by symmetric distributions, we have $\E[\bbb\od_{2i-1}] = 0$, thus $\xb\eo_{2i-1} = \frac{\bbb\eo_{2i-1}}{\sigma_{b, 2i-1}}$, where $\sigma^2_{b, 2i-1} = \Var[\bbb\eo_{2i-1}] =  \Var[\xb\eo_{2i-2}] + \Var[\ab\eo_{2i-1}]$.

Referring $\E[\softmax^2(\xb\eo_{2i-2} \wq_h \wk_h \xbteo_{2i-2})]$ as $P_h$, we have 
$$
\Var[\ab\od_{2i-1}] = \Var[\xb\eo_{2i-2} \wv_h \wo_h] H P_h.
$$
Similar to \citet{He2015DelvingDI}, we have
$$
\Var[\xb\eo_{2i-2} \wv_h \wo_h] = \frac{D^2}{H} \Var[\xb\eo_{2i-2}] \Var[w^{(V_1)}] \Var[w^{(V_2)}].
$$
Since $\xb\eo_{2i-2}$ is the output of layer norm, we have $\Var[\xb\eo_{2i-2}] = 1$. Thus, 
\begin{align}
\sigma^2_{b, 2i-1} = 1 + D^2 P_h \Var[\xb\eo_{2i-2}] \Var[w^{(V_1)}] \Var[w^{(V_2)}].
\label{eqn:satt-sigma}
\end{align}
In the backpropagation, we have
\begin{align*}
& \Var[\Delta \xb\eo_{2i-2}] \geq \Var[\frac{1}{\sigma_{b, 2i-1}} (\Delta \xb\eo_{2i-1} + \Delta \xb\eo_{2i-1} \sum_h \frac{\partial\softmax(\xb\eo_{2i-2} \wq_h \wk_h \xbteo_{2i-2})\xb\eo_{2i-2} \wv_h \wo_h}{\partial \xb\eo_{2i-2}})] \\
&\geq \frac{1}{\sigma^2_{b, 2i-1}}(\Var[\Delta \xb\eo_{2i-1}] + \Var[\Delta \xb\eo_{2i-1} \sum_h \softmax(\xb\eo_{2i-2} \wq_h \wk_h \xbteo_{2i-2})\frac{\partial\xb\eo_{2i-2} \wv_h \wo_h}{\partial \xb\eo_{2i-2}} ])
\end{align*}
At initialization, we assume $\Delta \xb\eo_{2i-1}$ and model parameters are independent~\cite{He2015DelvingDI}, thus
\begin{align*}
& \Var[\Delta \xb\eo_{2i-1} \sum_h \softmax(\xb\eo_{2i-2} \wq_h \wk_h \xbteo_{2i-2})\frac{\partial\xb\eo_{2i-2} \wv_h \wo_h}{\partial \xb\eo_{2i-2}}] \\
=&  D^2 P_h \Var[\Delta \xb\eo_{2i-1}] \Var[w^{(V_1)}] \Var[w^{(V_2)}]
\end{align*}
Therefore, we have
\begin{align}
\Var[\Delta \xb\eo_{2i-2}]& \geq  \frac{1}{\sigma^2_{b, 2i-1}} (1 + D^2 P_h \Var[w^{(V_1)}] \Var[w^{(V_2)}]) \Var[\Delta \xb\eo_{2i-1}].
\label{eqn:satt-back-propagation}
\end{align}
Integrating Equation~\ref{eqn:satt-sigma} with Equation~\ref{eqn:satt-back-propagation}, we have
\begin{align}
\Var[\Delta \xb\eo_{2i-2}] \geq \Var[\Delta \xb\eo_{2i-1}].
\label{eqn:satt-no-vanish}
\end{align}
Combining Equation~\ref{eqn:ffn-no-vanish} and Equation~\ref{eqn:satt-no-vanish}, we have $\Var[\Delta \xb_{i-1}] \geq \Var[\Delta \xb_{i}]$.
\end{proof}


\subsection{Post-LN Decoder Analysis}
\label{subsec:postln_decoder_analysis}
In Post-LN, the Encoder-Attention sub-layer suffers from gradient vanishing. 
The Encoder-Attention sub-layer calculates outputs as 
$\xb\od_{3i-1} = \LN(\bbb\od_{3i-1})$ where $\bbb\od_{3i-1} = \xb\od_{3i-2} + \ab\od_{3i-1}$ and $\ab\od_{3i-1} = \sum_{h} \softmax(\xb\od_{3i-2} \wq_h \wk_h {\xb^T}^{(oe)})\xb^{(oe)} \wv_h \wo_h$. 
Here $\xb^{(oe)}$ is encoder outputs and $\softmax$ is the row-wise softmax function. 
In the backpropagation, $\Delta \xb\od_{3i-2} \approx \frac{\Delta \xb\od_{3i-1}}{\sigma_{b, 3i-1}} (1 + 
\frac{\partial \ab\od_{3i-1} }{\partial \xb\od_{3i-2}}).$
All of the backpropagations from $\ab\od_{3i-1} $ to $\xb\od_{3i-2}$ went through the softmax function, we have $\Var[\frac{\partial \ab\od_{3i-1} }{\partial \xb\od_{3i-2}}] + 1 \leq \sigma^2_{b, 3i-1}$. 
Thus, those backpropagations suffer from gradient vanishing. 
This observation is further verified in Figure~\ref{fig:gradient-histogram}, as the encoder attention bars (gradients of encoder-attention outputs) are always shorter than self-attention bars (gradients of encoder-attention inputs), while adjacent self-attention bars and fully connected bars usually have the same length.

\subsection{Distributes of Unbalanced Gradients}
\label{subsec:attention_gradients}

\begin{figure}[t]
\centering
\includegraphics[width=\linewidth]{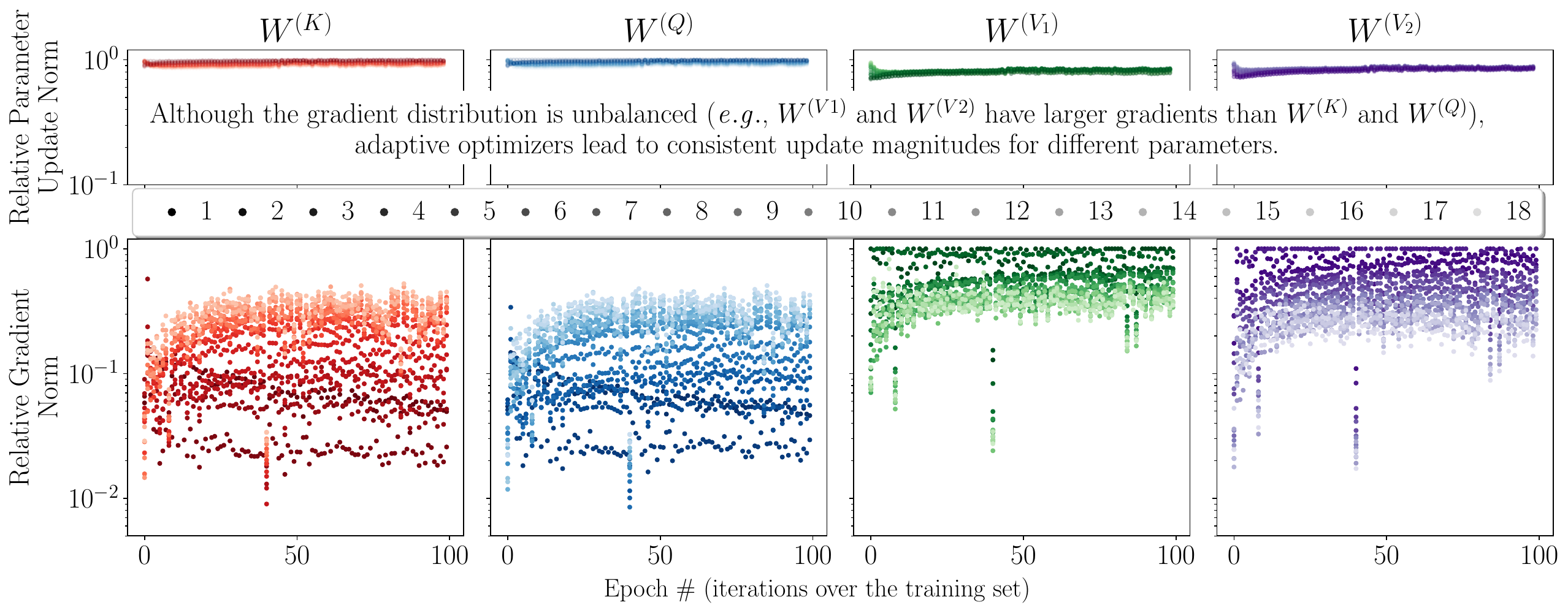}
\caption{
Relative Norm of Gradient ($\Delta W_i$, where $W_i$ is the checkpoint of $i$-th epoch) and Update ($|W_{i+1} - W_i|$) of Self-Attention Parameters in 12-Layer Pre-LN.
}
\label{fig:update-with-gradient-four}
\end{figure}

As in Figure~\ref{fig:all_histogram} and Figure~\ref{fig:update-with-gradient-four}, the gradient distribution of Attention modules is unbalanced even for Pre-LN. 
Specifically, parameters within the softmax function (\ie, $\wk$ and $\wv$) suffer from gradient vanishing (\ie, $\frac{\partial \softmax(x_0, \cdots, x_i, \cdots)}{\partial x_i} \leq 1$) and have smaller gradients than other parameters.

With further analysis, we find it is hard to neutralize the gradient vanishing of softmax. 
Unlike conventional non-linear functions like ReLU or sigmoid, softmax has a dynamic input length (\ie, for the sentences with different lengths, inputs of softmax have different dimensions). 
Although this setting allows Attention modules to handle sequential inputs, it restricts them from having stable and consistent backpropagation. 
Specifically, let us consider the comparison between softmax and sigmoid. 
For the sigmoid function, although its derivation is smaller than 1, this damping effect is consistent for all inputs. 
Thus, sigmoid can be neutralized by a larger initialization~\cite{Glorot2010UnderstandingTD}. 
For softmax, its damping effect is different for different inputs and cannot be neutralized by a static initialization. 

Also, we observe that adaptive optimizers largely address this issue. 
Specifically, we calculate the norm of parameter change in consequent epochs (\eg, $|\wk_{t+1} - \wk_{t}|$ where $\wk_{t}$ is the checkpoint saved after $t$ epochs) and visualize the relative norm (scaled by the largest value in the same network) in Figure~\ref{fig:update-with-gradient-four}.  
Comparing the relative norm of parameter gradients and parameter updates, we notice that: although the gradient distribution is unbalanced, adaptive optimizers successfully assign different learning rates to different parameters and lead to consistent update magnitudes. 
This result explains why the vanilla SGD fails for training Transformer (\ie, lacking the ability to handle unbalanced gradient distributions). 
Besides, it implies that the unbalanced gradient distribution (\eg, gradient vanishing) has been mostly addressed by adaptive optimizers and may not significantly impact the training instability. 

\section{Proof of Theorem~\ref{theo:layer-dependency-and-shift}}
\label{appendix:theo2}

Here, we elaborate the derivation for Theorem~\ref{theo:layer-dependency-and-shift}, which establishes the relationship between layer number and output fluctuation brought by parameter change. 

\begin{theorem}
Consider a $N$-layer Transformer $\hxb = \cF(\hxb_0, W)$, where $\hxb_0$ is the input and $W$ is the parameter. 
If the layer dependency stays the same after a parameter change 
(\ie, $\beta_{i, j}$ has the same value after changing $W$ to $W^*$, where $W$ is randomly initialized and $\delta = W^* - W$ is independent to $W$), the output change (\ie, $\Var[\cF(\xb_0, W) - \cF(\xb_0, W^*)]$) can be estimated as $\sum_{i=1}^N \beta^2_{i, i} C$ where $C$ is a constant.
\end{theorem}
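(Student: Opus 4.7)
The plan is to turn the claim into a recursion for the layer-wise perturbation variance $V_j := \Var[\hxb_j - \hxb_j^*]$ and to show that each step grows by exactly $\beta_{j,j}^2 C$, so that telescoping from $V_0 = 0$ to $V_N$ produces the stated bound.

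First I would rewrite the forward cascade implied by $\hxb_j = \sum_{k \le j}\beta_{j,k}\hab_k$ as the one-step update
$$\hxb_j \;=\; \alpha_j\, \hxb_{j-1} \;+\; \beta_{j,j}\, \hab_j, \qquad \alpha_j^2+\beta_{j,j}^2 = 1,$$
which follows directly from the layer-norm rescaling in both Pre-LN (where $\alpha_j^2 = \Var[\xb_{j-1}]/\Var[\xb_j]$ and $\beta_{j,j}^2 = \Var[\ab_j]/\Var[\xb_j]$) and Post-LN (where the unit-variance output of the previous layer norm gives $\alpha_j^2 = 1/(1+\Var[\ab_j])$). Applying the same identity to the perturbed network and invoking the hypothesis that the $\beta_{i,j}$'s are invariant under $W \mapsto W^*$, the deviation $\Delta_j := \hxb_j - \hxb_j^*$ satisfies
$$\Delta_j \;=\; \alpha_j\, \Delta_{j-1} \;+\; \beta_{j,j}\,\bigl(\hab_j - \hab_j^*\bigr).$$

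Next I would expand $\hab_j - \hab_j^*$ to first order in $\delta_j$ and $\Delta_{j-1}$, writing $\hab_j - \hab_j^* \approx \partial_{\hxb_{j-1}}\hat f_j\cdot \Delta_{j-1} + \partial_{W_j}\hat f_j\cdot \delta_j$, where $\hat f_j$ denotes the normalized residual branch. At initialization the sub-layer parameters are drawn from independent zero-mean symmetric distributions and $\delta = W^*-W$ is independent of $W$; as in Appendix~\ref{appendix:ini_grad}, this makes the two summands uncorrelated with each other and with $\Delta_{j-1}$. A Xavier/He-style second-moment computation in the spirit of Appendix~\ref{subsec:postln-encoder-analysis} then yields $\Var[\partial_{\hxb_{j-1}}\hat f_j\cdot \Delta_{j-1}] = V_{j-1}$ (the normalized Jacobian acts as a variance-preserving random map) and $\Var[\partial_{W_j}\hat f_j\cdot \delta_j] = \kappa$, a depth-independent constant that depends only on the magnitude of $\delta$ and on generic sub-layer derivatives. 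Substituting into the one-step formula and using $\alpha_j^2+\beta_{j,j}^2 = 1$ gives the clean recursion
$$V_j \;=\; \alpha_j^2 V_{j-1} + \beta_{j,j}^2 (V_{j-1} + \kappa) \;=\; V_{j-1} + \beta_{j,j}^2\,\kappa,$$
and telescoping from $V_0 = 0$ produces $V_N = \kappa\sum_{i=1}^N \beta_{i,i}^2$, which is the desired estimate with $C=\kappa$.

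The main obstacle I anticipate is justifying the unit-gain claim $\Var[\partial_{\hxb_{j-1}}\hat f_j\cdot \Delta_{j-1}] = V_{j-1}$ uniformly across the $\SATT$, $\EATT$, and $\FFN$ sub-layers; the softmax inside attention complicates the Jacobian second moment, and in Post-LN decoders the identity is only approximate (consistent with the theorem being phrased as an estimate). A secondary subtlety is that the hypothesis ``$\beta_{i,j}$ is invariant'' has its real bite through forcing $\Var[\ab_j^*] \approx \Var[\ab_j]$, so that the same $\alpha_j$ and $\beta_{j,j}$ can be used in the perturbed recursion. Once these are handled, Corollaries~\ref{cor:pre-ln} and~\ref{cor:post-ln} drop out by substituting $\beta_{i,i}^2 = 1/i$ (harmonic sum $\Theta(\log N)$) and $\beta_{i,i}^2 = \mathrm{const}$ (linear sum $\Theta(N)$), matching the $R^2 \approx 0.99$ fits reported in Figure~\ref{fig:stability}.
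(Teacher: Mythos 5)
Your proposal is correct and follows essentially the same route as the paper's own derivation: the one-step update $\hxb_j=\alpha_j\hxb_{j-1}+\beta_{j,j}\hab_j$ with $\alpha_j^2=1-\beta_{j,j}^2$ is exactly the paper's identity $\sum_{j<i}\beta_{i,j}^2\Var[\hab_j-\hab_j^*]=(1-\beta_{i,i}^2)\Var[\hxb_{i-1}-\hxb_{i-1}^*]$, and your split of $\hab_j-\hab_j^*$ into an input-propagated term of variance $V_{j-1}$ plus a weight-perturbation term of constant variance $C$ is the paper's Equation~\ref{eqn:a-decompose}, leading to the identical telescoping recursion $V_j=V_{j-1}+\beta_{j,j}^2C$. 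The approximations you flag (variance preservation of the normalized sub-layer Jacobian, decorrelation of the summands at initialization) are precisely the ones the paper also invokes.
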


\begin{proof}

We refer the module in $i$ sub-layer as $\ab_i = \cG_i(\hxb_{i-1}, W_i)$, where $\hxb_{i} = \sum_{j\leq i} \beta_{i, j} \hab_j$ is the normalized residual output and $\hab_i = \frac{\ab_i}{\sqrt{\Var{\ab_i}}}$ is the normalized module output. 
The final output is marked as $\hxb = \cF(\xb_0, W) = \sum_{j\leq N} \beta_{N, j} \hab_j$.
To simplify the notation, we use the superscript $*$ to indicate variables related to $W^*$, \eg, $\hxb^* = \cF(\xb_0, W^*)$ and $\ab_i^* = \cG_i(\hxb^*_{i-1}, W^*_i)$.

At initialization, all parameters are initialized independently.
Thus $\forall i \neq j$, $\hab_i$ and $\hab_j$ are independent and $1 = \Var[\sum_{j\le i} \beta_{i, j}\hab_j] = \sum_{j \le i} \beta_{i, j}^2$. 
Also, since $k$-layer and $(k+1)$-layer share the residual connection to previous layers, $\forall i, j \leq k$ we have $\frac{\beta_{i, k}}{\beta_{j, k}} = \frac{\beta_{i, k+1}}{\beta_{j, k+1}}$.
Thus $\forall i \leq k, \beta^2_{i, k + 1} = (1 - \beta^2_{k, k}) \beta^2_{i, k}$ and 
\begin{align}
\Var[\hxb_i - \hxb^*_i] & = \Var[\sum_{j \leq i} \beta_{i, j} (\hab_j - \hab_j^*)] = \sum_{j \leq i} \beta^2_{i, j} \Var[\hab_j - \hab_j^*] \nonumber\\
& = \beta^2_{i, i} \Var[\hab_i - \hab_i^*] + (1 - \beta^2_{i, i}) \Var[\hxb_i - \hxb^*_i]. \label{eqn:recurrent}
\end{align}

Now, we proceed to analyze $\Var[\hab_i - \hab_i^*]$. Specifically, we have
\begin{align}
\Var[\hab_i - \hab_i^*] &= \Var[\cG_i(\hxb_{i-1}, W_i) - \cG_i(\hxb^*_{i-1}, W_i^*)] \nonumber \\
&= \Var[\cG_i(\hxb_{i-1}, W_i) - \cG_i(\hxb^*_{i-1}, W_i) + \cG_i(\hxb^*_{i-1}, W_i^*) - \cG_i(\hxb^*_{i-1}, W_i^*)] \nonumber \\
&= \Var[\cG_i(\hxb_{i-1}, W_i) - \cG_i(\hxb^*_{i-1}, W_i)] + \Var[\cG_i(\hxb^*_{i-1}, W_i) - \cG_i(\hxb^*_{i-1}, W_i^*)]. 
\label{eqn:a-decompose}
\end{align}
Since $W$ is randomly initialized, $\Var[\cG_i(\hxb^*_{i-1}, W_i) - \cG_i(\hxb^*_{i-1}, W_i^*)]$ should have the same value for all layers, thus we use a constant $C$ to refer its value ($C=\Var[\cG_i(\hxb^*_{i-1}, W_i) - \cG_i(\hxb^*_{i-1}, W_i^*)]$ and $C \approx |\delta| \cdot |\nabla \cG_i(\hxb^*_{i-1}, W_i)|$). 
As to $\Var[\cG_i(\hxb_{i-1}, W_i) - \cG_i(\hxb^*_{i-1}, W_i)]$, since the sub-layer of Transformers are mostly using linear weights with ReLU nonlinearity and $1 = \Var[\cG_i(\hxb_{i-1}, W_i)] = \Var[\hxb_{i-1}]$, we have $\Var[\cG_i(\hxb_{i-1}, W_i) - \cG_i(\hxb^*_{i-1}, W_i)] \approx \Var[\hxb_{i-1} - \hxb^*_{i-1}]$. 
Thus, we can rewrite Equation~\ref{eqn:a-decompose} and get  
$$
\Var[\hab_i - \hab_i^*] \approx \Var[\hxb_{i-1} - \hxb^*_{i-1}] + C
$$
With Equation~\ref{eqn:recurrent}, we have
\begin{align*}
\Var[\hxb_i - \hxb^*_i] &= \beta^2_{i, i} \Var[\hab_i - \hab_i^*] + (1 - \beta^2_{i, i}) \Var[\hxb_i - \hxb^*_i] \\
& \approx \beta^2_{i, i} (\Var[\hxb_{i-1} - \hxb^*_{i-1}] + C) + (1 - \beta^2_{i, i}) \Var[\hxb_i - \hxb^*_i] \\
& = \Var[\hxb_i - \hxb^*_i] + \beta^2_{i, i} C 
\end{align*}
Therefore, we have $\Var[\cF(\xb_0, W) - \cF(\xb_0, W^*)] \approx \sum_{i=1}^N \beta^2_{i, i} C$.
\end{proof}

\section{\our Implementation Details}
\label{appendix:implement}

As introduced in Section~\ref{subsec:admin}, we introduce a new set of parameters to rescale the module outputs. 
Specifically, we refer these new parameters as $\omega$ and construct the Post-LN sub-layer as:
$$
\xb_{i} = \LN (\bbb_i), \mbox{where}\, \bbb_i=\xb_{i-1} \cdot \bomega_i + f_i (\xb_{i-1})
$$
where $\cdot$ is the element-wise product. 

After training, \our can be reparameterized as the conventional Post-LN structure (\ie, removing $\bomega_i$). 
Specifically, we consider $\xb_i = \frac{\bbb_i}{\sigma_{b}}\bgamma + \bnu$. 
Then, for feedforward sub-layers, we have
$$
\bbb_{i} = \xb_{i-1} \cdot \omega +\max(0, \xb_{i-1}\wf)\wff, \mbox{where}\, \xb_{i} = \frac{\bbb_{i-1}}{\sigma_b}\bgamma + \bnu. 
$$
It can be reparameterized by changing $\bgamma$, $\bnu$, $\wf$ to $\bgamma\bomega_i$, $\bnu\bomega_i$, $\frac{1}{\bomega_i}\wf$ respectively, \ie, 
$$
\bbb'_{i} = \xb'_{i-1} + \max(0, \xb'_{i-1}\frac{1}{\bomega_i}\wf)\wff, \mbox{where}\, \xb'_{i-1} = \frac{\bbb'_{i-1}}{\sigma_{b}}\bgamma\bomega_i + \bnu\bomega_i. 
$$
For Self-Attention sub-layers, we have 
$$
\bbb_{i} = \xb_{i-1} + \sum_{h} \softmax(\xb_{i-1} \wq_h \wk_h \xb_{i-1})\xb_{i-1} \wv_h \wo_h, \mbox{where}\, \xb_{i} = \frac{\bbb_{i-1}}{\sigma_b}\bgamma + \bnu.
$$ 
It can be reparameterized by changing $\bgamma$, $\bnu$, $\wq_h$, $\wk_h$, $\wv_h$ to $\bgamma\bomega_i$, $\bnu\bomega_i$, $\frac{1}{\bomega_i}\wq_h$, $\frac{1}{\bomega_i}\wk_h$ $\frac{1}{\bomega_i}\wv_h$ respectively, \ie, 
$$
\bbb'_{i} = \xb'_{i-1} + \sum_{h} \softmax(\xb'_{i-1} \frac{1}{\bomega_i}\wq_h \wk_h \frac{1}{\bomega_i}\xb'_{i-1})\xb'_{i-1} \frac{1}{\bomega_i}\wv_h \wo_h, \mbox{where}\, \xb'_{i-1} = \frac{\bbb'_{i-1}}{\sigma_{b}}\bgamma\bomega_i + \bnu\bomega_i.
$$
For Encoder-Attention sub-layers, we have 
$$
\bbb_{i} = \xb_{i-1} + \sum_{h} \softmax(\xb_{i-1} \wq_h \wk_h \xb^{\cdot e})\xb^{\cdot e} \wv_h \wo_h, \mbox{where}\, \xb_{i} = \frac{\bbb_{i-1}}{\sigma_b}\bgamma + \bnu.
$$ 
It can be reparameterized by changing $\bgamma$, $\bnu$, $\wq_h$ to $\bgamma\bomega_i$, $\bnu\bomega_i$, $\frac{1}{\bomega_i}\wq_h$ respectively, \ie, 
$$
\bbb'_{i} = \xb'_{i-1} + \sum_{h} \softmax(\xb'_{i-1} \frac{1}{\bomega_i}\wq_h \wk_h \xb^{\cdot e})\xb^{\cdot e} \frac{1}{\bomega_i}\wv_h \wo_h, \mbox{where}\, \xb'_{i-1} = \frac{\bbb'_{i-1}}{\sigma_{b}}\bgamma\bomega_i + \bnu\bomega_i.
$$
It is easy to find $\bbb'_i = \bbb_i$ in all three situations. 

From the previous analysis, it is easy to find that introducing the additional parameter $\bomega_i$ is equivalent to rescale some model parameters. 
In our experiments on IWSLT14 De-En, we find that directly rescaling initialization parameters can get roughly the same performance with introducing $\bomega_i$. 
However, it is not very stable when conducting training in a half-precision manner. 
Accordingly, we choose to add new parameters $\bomega_i$ instead of rescaling parameters. 

\section{Experimental Setup}
\label{appendix:exp}

Our experiments are based on the implementation from the fairseq package~\citep{ott2019fairseq}.
As to pre-processing, we follow the public released script from previous work~\citep{ott2019fairseq,lu2020understanding}. 
For WMT'14 datasets, evaluations are conducted on the provided `newstest14` file, and more details about them can be found in \citet{bojar2014findings}. 
For the IWSLT'14 De-En dataset, more analysis and details can be found in \citet{cettolo2014report}. 

\begin{table}[t]
\centering
\caption{ReZero Performance on IWSLT'14 De-En. Models are Transformer-small w. 6-layer encoder \& decoder.}
\label{tab:rezero}
\begin{tabular}{r|ccccc}
\toprule
Models & Admin & Post-LN & Pre-LN & ReZero & ReZero+Post-LN \\
\midrule
BLEU & 35.67$\pm$0.15 & 35.64$\pm$0.23 & 35.50$\pm$0.04 & 33.67$\pm$0.14 & 34.67$\pm$0.08 \\
\bottomrule
\end{tabular}
\end{table}

\begin{table}[t]
\centering
\caption{Performance and model size on WMT'14 En-Fr (AL-BL refers A-layer encoder \& B-layer decoder).}
\label{tab:wmt14fr}
\begin{tabularx}{\linewidth}{lccYY}
\toprule
Methods & Param. \# & dim($W^{(1)}$) in FFN & Enc\#-Dec\# & BLEU \\
\midrule
T5-Base~\cite{raffel2019exploring} & 220 M & $512\times2048$ & 6L-6L & 41.2 \\
T5-Large~\cite{raffel2019exploring} & 770 M & $1024\times4096$ & 12L-12L & 41.5 \\
T5-3B~\cite{raffel2019exploring} & 3 B & $1024\times16384$ & 24L-24L & 42.6 \\
T5-11B~\cite{raffel2019exploring} & 11 B & $1024\times65536$ & 24L-24L & 43.4 \\
\midrule
Trans.Big-RNMT+~\cite{Chen2018TheBO} & 377 M &  $1024\times8192$ & 6L-6L & 41.12 \\
DynamicConv~\cite{wu2018pay} & 213 M & $1024\times4096$ & 7L-7L & 43.2 \\
DG-Transformer~\cite{Wu2019DepthGF} & 264 M & $1024\times4096$ & 8L-8L & 43.27 \\
Prime~\cite{zhao2019muse} & 252 M & $1024\times4096$ & 6L-6L & 43.48 \\
\midrule
Pre-LN (60L--12L) & 262 M &  $512\times2048$ & 60L-12L & 43.10 \\
Admin (60L--12L) & 262 M & $512\times2048$ & 60L-12L & 43.80 \\
\bottomrule
\end{tabularx}
\end{table}

As to model specifics, we directly adopt Transformer-small configurations on the IWSLT'14 De-En dataset and stacks more layers over the Transformer-base model on the WMT'14 En-De and WMT'14 En-Fr datasets.
Specifically, on the IWSLT'14 De-En dataset, we use word embedding with 512 dimensions and 6-layer encoder/decoder with 4 heads and 1024 feedforward dimensions; on the WMT'14 En-De and WMT'14 En-Fr datasets, we use word embedding with 512 dimension and 8-head encoder/decoder with 2048 hidden dimensions. 
Label smoothed cross entropy is used as the objective function with an uncertainty $= 0.1$~\citep{szegedy2016rethinking}. 

For Model training, we use RAdam as the optimizer~\cite{Liu2019OnTV} and adopt almost all hyper-parameter settings from \citet{lu2020understanding}.
Specifically, for the WMT'14 En-De and WMT'14 En-Fr dataset, all dropout ratios (including (activation dropout and attention dropout) are set to 0.1. 
For the IWSLT'14 De-En dataset, after-layer dropout is set to $0.3$, and a weight decay of $0.0001$ is used. 
As to optimizer, we set $(\beta_1, \beta_2) = (0.9, 0.98)$,  use inverse sqrt learning rate scheduler with a warmup phrase (8000 steps on the WMT'14 En-De/Fr dataset, and 6000 steps on the IWSLT'14 De-En dataset). 
The maximum learning rate is set to $1e^{-3}$ on the WMT'14 En-De dataset and $7e^{-4}$ on the IWSLT'14 De-En and WMT'14 En-Fr datasets. 
We conduct training for $100$ epochs on the WMT'14 En-De dataset, $90$ epochs on the IWSLT'14 De-En dataset and $50$ epochs on the WMT'14 En-Fr dataset, while the last 10 checkpoints are averaged before inference. 

On the IWSLT'14 De-En dataset, we conduct training on one NVIDIA GeForce GTX 1080 Ti GPU and set the maximum batch size to be $4096$. 
On the WMT'14 En-De dataset, we conduct training on four NVIDIA Quadro R8000 GPUs and set maximum batch size (per GPU) as $8196$. 
On the WMT'14 En-Fr dataset, we conduct training with the Nvidia DGX-2 server (6L-6L uses 4 NVIDIA TESLA V100 GPUs and 60L-16L uses 16 NVIDIA TESLA V100 GPUs) and set the maximum batch size (per GPU) as $8000$ for 6L-6L and $5000$ for 60L-16L. 
On the IWSLT'14 De-En dataset, Transformer-small models (w. 37 M Param.) take a few hours to train.
On the WMT'14 En-De dataset, 6L-6L models (w. 63 M Param.) take $\sim1$ day to train, 
12L-12L (w. 107M Param.) models take $\sim2$ days to train, 
and 18L-18L (w. 151M Param.) models take $\sim3$ days to train. 
On the WMT'14 En-Fr dataset, 6L-6L models (w. 67 M Param.) takes $\sim2$ days to train, and 60L-12L models (w. 262M Param.) takes $\sim 2.5$ days to train. 
All training is conducted in half-precision with dynamic scaling (with a 256-update scaling window and a 0.03125 minimal scale).
All our implementations and pre-trained models would be released publicly. 

\section{Comparison to ReZero}
\label{appendix:rezero}

Here, we first conduct comparisons with ReZero~\cite{Bachlechner2020ReZeroIA} under two configurations--the first employs the original ReZero model, and the second adds layer normalizations in a Post-LN manner. 
As summarized in Table~\ref{tab:rezero}, the ReZero initialization leads to a performance drop, no matter layer normalization is used or not. 
It verifies our intuition that over small dependency restricts the model potential. 
At the same time, we find that adding layer normalization to ReZero helps to improve the performance. 
Intuitively, as dropout plays a vital role in regularizing Transformers, layer normalization helps to not only stabilize training but alleviate the impact of turning off dropouts during the inference. 

\section{Performance on the WMT'14 En-Fr}
\label{appendix:enfr}

To explore the potential of \our, we conduct experiments with 72-layer Transformers on the WMT'14 En-Fr dataset (with a 60-layer encoder and 12-layer decoder, we add less layers to decoder to encourage the model to rely more on the source context). 

As in Table~\ref{tab:wmt14fr}, \our (60L--12L) achieves a BLEU score of 43.80, the new state-of-the-art on this long-standing benchmark. 
This model has a 60-layer encoder and a 12-layer decoder, which is significantly deeper than other baselines.
Still, since the number of parameters increases in a quadratic speed with regard to hidden dimensions and a linear speed with regard to layer numbers, our model has roughly the same number of parameters with other baselines. 
It is worth mentioning that \our even achieves better performance than all variants of pre-trained T5 models, which demonstrates the great potential of our proposed method.
Also, \our achieves a better performance than Pre-LN (60L--12L), which further verifies that the Pre-LN architecture restricts deep models' potential.



\end{document}